
\documentclass{article}

\usepackage{microtype}
\usepackage{graphicx}
\usepackage{booktabs} 

\usepackage{hyperref}


\usepackage[accepted]{icml2020}

\usepackage{amsmath}
\usepackage{amssymb}
\usepackage{amsthm}
\usepackage{subcaption}

\usepackage{amsmath,amsfonts,bm}




\def\Figref#1{Figure~\ref{#1}}





\def\eqref#1{equation~\ref{#1}}
\def\Eqref#1{Equation~\ref{#1}}








\def\1{\bm{1}}




\def\rva{{\mathbf{a}}}
\def\rvb{{\mathbf{b}}}

\def\rvd{{\mathbf{d}}}
\def\rve{{\mathbf{e}}}

\def\rvu{{\mathbf{i}}}

\def\rvu{{\mathbf{u}}}
\def\rvv{{\mathbf{v}}}
\def\rvw{{\mathbf{w}}}
\def\rvx{{\mathbf{x}}}
\def\rvy{{\mathbf{y}}}
\def\rvz{{\mathbf{z}}}

\def\trvd{{\tilde{\rvd}}}
\def\trva{{\tilde{\rva}}}

\def\rvzero{{\mathbf{0}}}


\def\rmA{{\mathbf{A}}}
\def\rmB{{\mathbf{B}}}

\def\rmD{{\mathbf{D}}}
\def\rmE{{\mathbf{E}}}

\def\rmG{{\mathbf{G}}}

\def\rmI{{\mathbf{I}}}

\def\rmM{{\mathbf{M}}}

\def\rmP{{\mathbf{P}}}

\def\rmW{{\mathbf{W}}}

\def\trmD{{\tilde{\rmD}}}
\def\trmW{{\tilde{\rmW}}}
\def\trmG{{\tilde{\rmG}}}
\def\trmA{{\tilde{\rmA}}}





\DeclareMathAlphabet{\mathsfit}{\encodingdefault}{\sfdefault}{m}{sl}
\SetMathAlphabet{\mathsfit}{bold}{\encodingdefault}{\sfdefault}{bx}{n}


\def\gF{{\mathcal{F}}}

\def\gN{{\mathcal{N}}}

\def\gS{{\mathcal{S}}}




\def\emC{{C}}
\def\emD{{D}}
\def\emE{{E}}

\def\emG{{G}}

\def\emT{{T}}

\def\emW{{W}}







\newcommand{\E}{\mathbb{E}}

\newcommand{\R}{\mathbb{R}}

\newcommand{\Var}{\mathrm{Var}}

\newcommand{\normlzero}{L^0}
\newcommand{\normlone}{L^1}


\DeclareMathOperator*{\argmin}{arg\,min}

\DeclareMathOperator*{\prox}{prox}
\newcommand{\st}{\operatorname{s.t.}}

\DeclareMathOperator*{\minimize}{\text{minimize}}

\newcommand{\norm}[1]{\left\lVert#1\right\rVert}
\newcommand{\norminf}[1]{\left\lVert#1\right\rVert_{\infty}}
\newcommand{\abs}[1]{\left\lvert#1\right\rvert}

\newtheorem{theorem}{Theorem}

\newtheorem{definition}{Definition}

\newcommand{\thmax}{\theta_{\max}}
\newcommand{\thmin}{\theta_{\min}}
\newcommand{\Supp}{{\textnormal{Supp}}}
\def\tmu{{\widetilde \mu}}
\newcommand{\thickbar}[1]{\mathbf{\bar{\text{$#1$}}}}
\def\bmu{{\thickbar \mu}}
\def\taud{{\tau_{\textnormal{d}}}}
\def\tauod{{\tau_{\textnormal{od}}}}
\def\Vd{{v_{\textnormal{d}}}}
\def\Vod{{v_{\textnormal{od}}}}
\def\wdd{{w_{\textnormal{d}}}}
\def\brmG{{\thickbar \rmG}}
\def\bemG{{\thickbar \emG}}
\def\etrmG{{\tilde{\emG}}}

\newcommand{\inpaintingl}[1]{\includegraphics[width=0.22\linewidth]{figures/inpainting/ratio_0_5_lambd_0_1_T20/#1}}
\newcommand{\inpaintings}[1]{\includegraphics[width=0.20\linewidth]{figures/inpainting/ratio_0_5_lambd_0_1_T20/#1}}


\icmltitlerunning{Ada-LISTA: Learned Solvers Adaptive to Varying Models}

\begin{document}

\twocolumn[
\icmltitle{Ada-LISTA: Learned Solvers Adaptive to Varying Models}




\begin{icmlauthorlist}
\icmlauthor{Aviad Aberdam}{EE}
\icmlauthor{Alona Golts}{CS}
\icmlauthor{Michael Elad}{CS}
\end{icmlauthorlist}

\icmlaffiliation{EE}{Department of Electrical Engineering, Technion Institute of Technology, Israel.}
\icmlaffiliation{CS}{Department of Computer Science, Technion Institute of Technology, Israel}

\icmlcorrespondingauthor{Aviad Aberdam}{aaberdam@cs.technion.ac.il}
\icmlcorrespondingauthor{Alona Golts}{salonaz@cs.technion.ac.il}
\icmlcorrespondingauthor{Michael Elad}{elad@cs.technion.ac.il}

\icmlkeywords{Machine Learning, ICML, Sparse coding, Optimization, Learned solvers, Theory of neural networks}

\vskip 0.3in
]



\printAffiliationsAndNotice{}  

\begin{abstract}
Neural networks that are based on unfolding of an iterative solver, such as LISTA (learned iterative soft threshold algorithm), are widely used due to their accelerated performance. 
Nevertheless, as opposed to non-learned solvers, these networks are trained on a certain dictionary, and therefore they are inapplicable for varying model scenarios. 
This work introduces an adaptive learned solver, termed Ada-LISTA, which receives pairs of signals and their corresponding dictionaries as inputs, and learns a universal architecture to serve them all.
We prove that this scheme is guaranteed to solve sparse coding in linear rate for varying models, including dictionary perturbations and permutations. We also provide an extensive numerical study demonstrating its practical adaptation capabilities. Finally, we deploy Ada-LISTA to natural image inpainting, where the patch-masks vary spatially, thus requiring such an adaptation. 
\end{abstract}


\section{Introduction}

Sparse coding is the task of representing a noisy signal $\rvy \in \R^n$ as a combination of few base signals (called ``atoms''), taken from a matrix $\rmD \in \R^{n \times m}$ -- the ``dictionary''. This is represented as the need to compute $\rvx \in \R^m$ such that  
\begin{equation}
    \rvy \approx \rmD \rvx, \quad \st ~ \|\rvx\|_0 \leq s,
\end{equation}
where the $\normlzero$-norm counts the non-zero elements, $s$ is the cardinality of the representation, and $\rmD$ is often redundant ($m \geq n$). Among the various approximation methods for handling this NP-hard task, an appealing approach is a relaxation of the $\normlzero$ to an $\normlone$-norm using Lasso or Basis-Pursuit \cite{LASSO,chen2001atomic}, 
\begin{equation} \label{eq:basis_pursuit_objective}
    \underset{\rvx}{\text{minimize}} \frac{1}{2} \|\rvy - \rmD \rvx\|_2^2 + \lambda \|\rvx\|_1.
\end{equation}
An effective way to address this optimization problem uses an iterative algorithm such as ISTA (Iterative Soft Thresholding Algorithm) \cite{ista}, where the solution is obtained by iterations of the form
\begin{equation}\label{eq:ista_step}
    \rvx_{k+1} = \gS_{\frac{\lambda}{L}}\left(\rvx_k + \frac{1}{L}\rmD^T(\rvy - \rmD \rvx_k)\right), k=0,1, ..
\end{equation}
where $\frac{1}{L}$ is the step size determined by the largest eigenvalue of the Gram matrix $\rmD^T \rmD$, and $\gS_{\theta}(\rvx_i) = \text{sign}(\rvx_i)(|\rvx_i|-\theta_i)$ is the soft shrinkage function. Fast-ISTA (FISTA) \cite{beck2009fast} is a speed-up of the above iterative algorithm, which should remind the reader of the momentum method in optimization.

\begin{figure}[t]
    \centering
    \includegraphics[width=0.7\linewidth]{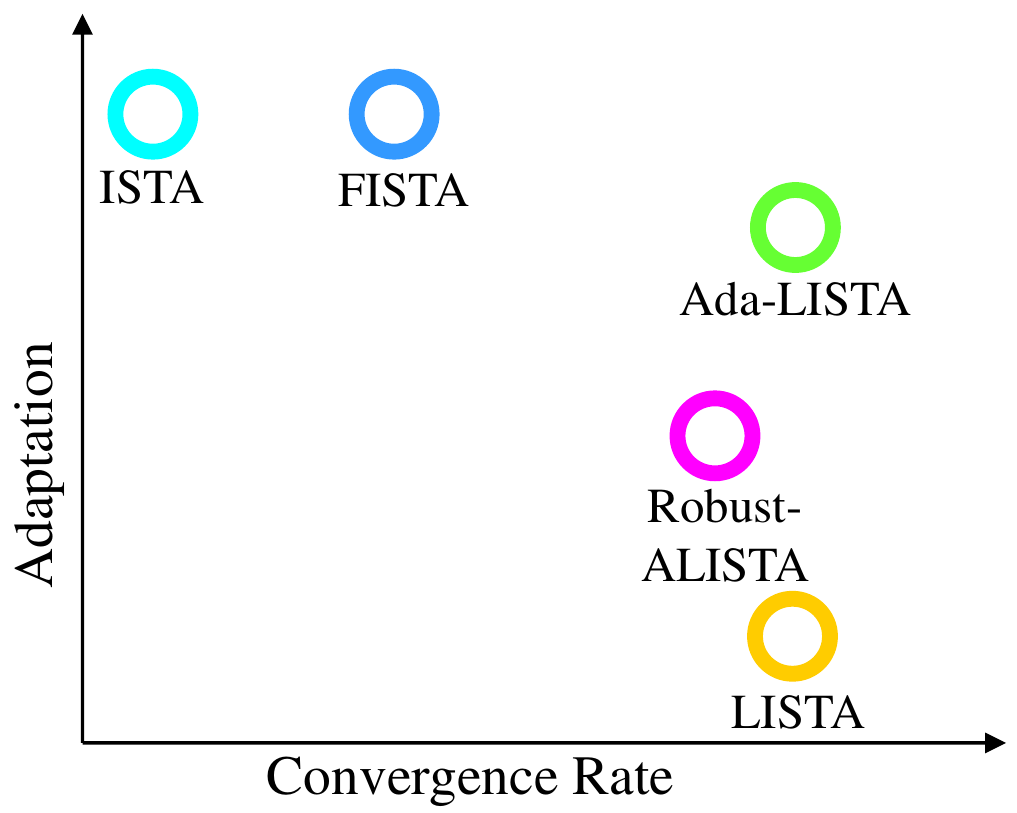}
    \caption{We propose ``Ada-LISTA'', a fusion between the flexible ISTA and FISTA schemes, receiving both the signal and dictionary at inference time, and the highly efficient learned solvers, LISTA and ALISTA.}
    \label{fig:rate_adaptation}
\end{figure}

As a side note, we mention that ISTA has a much wider perspective when aiming to minimize a function of the form
\begin{equation}\label{eq:convex_problem}
    F(\rvx) = f(\rvx) + g(\rvx),
\end{equation}
where $f$ and $g$ are convex functions, with $g$ possibly non-smooth. The solution is given by the proximal gradient method \cite{combettes2005signal,beck2017first}:
\begin{equation} \label{eq:proximal_method}
\begin{split}
    & \rvx_{k+1} = \prox_{g}\left( \rvx_k - \frac{1}{L}\nabla f(\rvx_k)  \right), \\
    & \prox_{g}(\rvu) = \argmin_{\rvv} \frac{1}{2}\norm{\rvv-\rvu}_2^2 + g(\rvv).
\end{split}
\end{equation}
The above  fits various optimization problems such as a projected gradient descent over an indicator function $g$, the matrix completion problem \cite{mazumder2010spectral}, portfolio optimization \cite{boyd2004convex}, non-negative matrix factorization \cite{sprechmann2015learning}, and more.

Returning to the realm of sparse coding, the seminal work of LISTA (Learned-ISTA) \cite{LISTA} has shown that by unfolding $K$ iterations of ISTA and freeing its parameters to be learned, one can achieve a substantial speedup over ISTA (and FISTA). Particularly, LISTA uses the following re-parametrization: 
\begin{equation}
    \rvx_{k+1} = \gS_{\theta} (\rmW_1 \rvy + \rmW_2 \rvx_k), \quad k=0,1,...,K-1,
\end{equation}
where $\rmW_1$ and $\rmW_2$ re-parametrize the matrices $\frac{1}{L} \rmD^T$ and $(\rmI - \frac{1}{L} \rmD^T \rmD)$ correspondingly. These two matrices and  the scalar thresholding value $\theta$ are collectively referred to as $\rm\Theta = (\rmW_1,\rmW_2,\theta)$ -- the parameters to be learned. The model, denoted as $\gF_K(\rvy; \rm\Theta)$, is trained by minimizing the squared error between the predicted sparse representations at the $K$th unfolding $\rvx_K=\gF_K(\rvy; \rm\Theta)$, and the optimal codes $\rvx$ obtained by running ISTA itself,
\begin{equation}\label{LISTA-loss}
    \minimize_{\rm\Theta} \sum_{i=1}^N \left\|\gF_K(\rvy_i; \rm\Theta) - \rvx_i \right\|_2^2.
\end{equation}

Once trained, LISTA requires only the test signals during inference, without their underlying dictionary. It has been shown in \cite{LISTA} that LISTA generalizes well for signals of the same distribution as in the train set, allowing a significant speedup versus its non-learned counterparts. This may be explained by the fact that while non-learned solvers do not make any assumption on the input signals, LISTA fits itself to the input distribution. More specifically, in sparse coding, the input signals are restricted to a union of low-dimensional Gaussians, as they are generated by a linear combination of few atoms. By focusing on such signals solely, this allows LISTA to achieve its acceleration. Note, however, that the original dictionary is hard-coded into the model weights via the ground truth solutions used during the supervised training. Given a new test sample that emerges from a slightly deviated (yet known) model/dictionary, LISTA will most likely deteriorate in performance, whereas ISTA and FISTA are expected 
to provide a robust and consistent result, as they are agnostic to the input signals and dictionary.

From a different point of view, a drawback of LISTA is its relevance to a single dictionary, requiring a separate and renewed training if the model evolves over time. Such is the case in video related applications as enhancement \cite{video1} or surveillance \cite{video2}, where the dictionary should vary along time. Similarly, in some image restoration problems, the model encapsulated by the dictionary is often corrupted by an additional constant perturbation, e.g., the sensing matrix in compressive sensing \cite{compressive_sensing}, the blur kernel in non-blind image deblurring \cite{image_deblurring}, and a spatially-varying mask in image inpainting \cite{image_inpainting}. In all these cases, deployment of the classic framework of LISTA necessitates a newly trained network for each new dictionary. An alternative to the above is incorporating LISTA as a fixed black-box denoiser, and merging it within the plug-and-play \cite{venkatakrishnan2013plug} or RED \cite{romano2017little} schemes, significantly increasing the inference complexity. 

\paragraph{Main Contributions:}
Our aim in this work is to extend the applicability of LISTA to scenarios of model perturbations and varying signal distributions. More specifically, 
\begin{itemize}
    \item We bridge the gap between the efficiency and the fast convergence rate of LISTA, and the high adaptivity and applicability of ISTA (and FISTA), by introducing ``Ada-LISTA'' (Adaptive-LISTA). Our training is based on pairs of signals and their corresponding dictionaries, learning a generic architecture that wraps the dictionary by two auxiliary weight matrices. At inference, our model can accommodate the signal and its corresponding dictionary, allowing to handle a variety of model modifications without repetitive re-training.
    \item We perform extensive numerical experiments, demonstrating the robustness of our model to three types of dictionary perturbations: permuted columns, additive Gaussian noise, and completely renewed random dictionaries. We demonstrate the ability of Ada-LISTA to handle complex and varying signal models while still providing an impressive advantage over both learned and non-learned solvers.
    \item We prove that our modified scheme achieves a linear convergence rate under a constant dictionary. More importantly, we allow for noisy modifications and random permutations to the dictionary and prove that robustness remains, with an ability to reconstruct the ideal sparse representations with the same linear rate. 
    \item We demonstrate the use of our approach on natural image inpainting, which cannot be directly used with hard-coded models as LISTA. We show a clear advantage of Ada-LISTA versus its non-learned counterparts.
    \end{itemize}

\noindent Adopting a wider perspective, our study contributes to the understanding of learned solvers and their ability to accelerate convergence. 
Common belief suggests that the signal model should be structured and fixed for successful learning of such solvers. Our work reveals, however, that effective learning can be achieved with a weaker constraint -- having a fixed conditional distribution of the data given the model $p(\rvy | \rmD)$. 

The LISTA concept of unfolding the iterations of a classical optimization scheme into an RNN-like neural network, and freeing its parameters to be learned over the training data, appears in many works. These include an unsupervised and online training procedure \cite{sprechmann2015learning}, a multi-layer version \cite{multi_layer}, a gated mechanism compensating shrinkage artifacts \cite{wu2020sparse}, as well as reduced-parameter schemes \cite{LISTA_CPSS,liu2018alista}. This paradigm has been brought to various applications, such as compressed sensing, super-resolution, communication, MRI reconstruction \cite{ISTA_net,ldamp,LISTA_superres,amp,ADMM_net,hershey2014deep}, and more. 
A prominent line of work investigates the success of such learned solvers from a theoretical point of view \cite{max_sparsity,l0_encoder,understanding_LISTA,tradeoffs,zarka2019deep}. 
Most of these consider a fixed signal model, with the exception of ``robust-ALISTA'' \cite{liu2018alista} that introduces an adaptive variation of LISTA.
This scheme, however, is restricted to small model perturbations, and cannot address more complicated model variations. 
A more detailed discussion of the relevant literature in relation of our study appears in Section \ref{sec:related_work}.


\section{Proposed Method}
\label{sec:our_method}

\begin{figure*}[ht]
    \centering
    \includegraphics[width=1.0\linewidth]{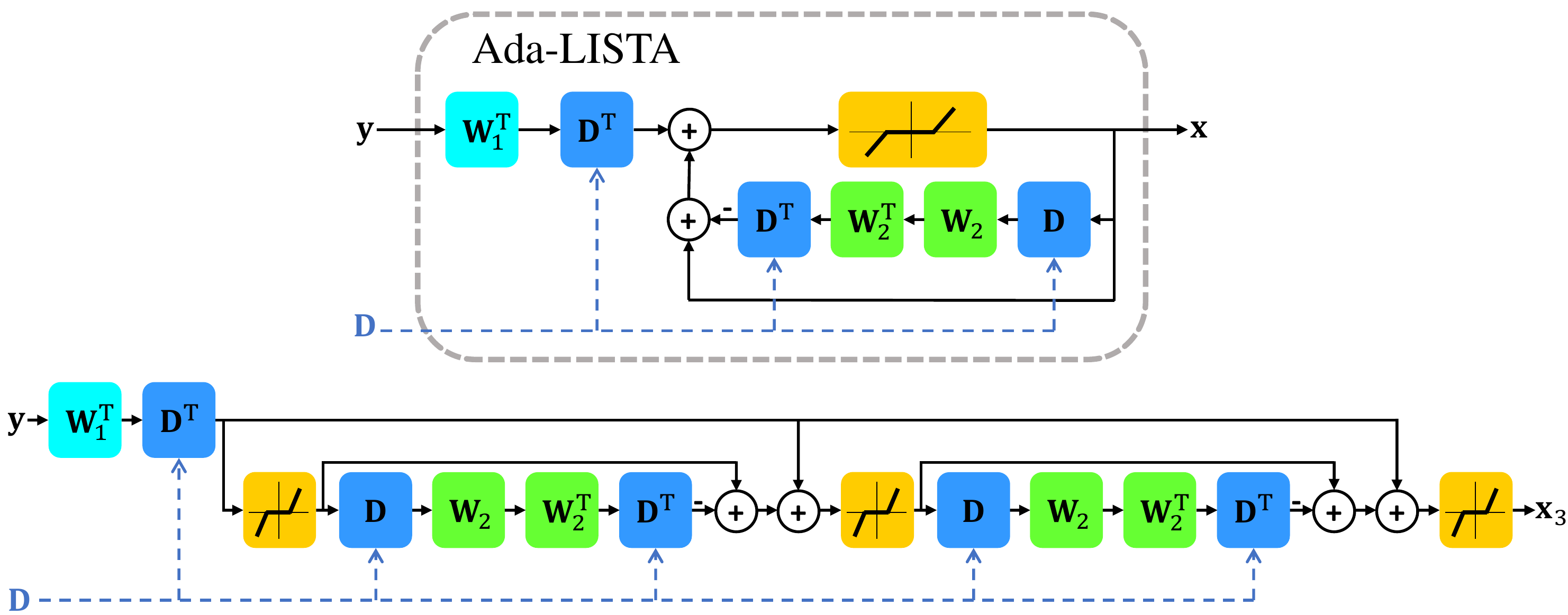}
    \caption{Ada-LISTA architecture as an iterative model (top), and its unfolded version for three iterations (bottom). The input dictionary $\rmD$ is embedded in the architecture, while the matrices $\rmW_1,\rmW_2$ are free to be learned.}
    \label{fig:adalista_scheme}
\end{figure*}

\begin{algorithm}[tbh]
   \caption{Ada-L(F)ISTA Inference}
   \label{alg:adalfista}
\begin{algorithmic}
   \STATE {\bfseries Input:} signal $\rvy$, dictionary $\rmD$
   \STATE {\bfseries Init:} $\rvx_0 = \rvzero$, $\rvz_0 = \rvzero$, $t_0 = 1$
   \FOR{$k=0$ {\bfseries to} $K - 1$}
   \STATE $\rvx_{k+1} = \gS_{\theta_{k+1}} \big( (\rmI - \gamma_{k+1} \rmD^T \rmW_2^T \rmW_2 \rmD) \rvz_k$ \\ 
   \hfill $+ \gamma_{k+1} \rmD^T \rmW_1^T \rvy \big)$
   \IF{Ada-LISTA}
   \STATE $\rvz_{k+1} = \rvx_{k+1}$
   \ELSIF{Ada-LFISTA}
   \STATE $t_{k+1} = \frac{1 + \sqrt{1 + 4 t_k^2}}{2}$
   \STATE $\rvz_{k+1} = \rvx_{k+1} + \frac{t_k - 1}{t_{k+1}}(\rvx_{k+1} - \rvx_k)$
   \ENDIF
   \ENDFOR
   \STATE {\bfseries Return:} $\gF_K(\rvy, \rmD; \rm\Theta) = \rvx_K$
\end{algorithmic}
\end{algorithm}
\begin{algorithm}[tbh]
   \caption{Ada-LISTA Training}
   \label{alg:adalfista_training}
\begin{algorithmic}
   \STATE {\bfseries Input:} pairs of signals and dictionaries $\{\rvy_i, \rmD_i\}_{i=1}^N$
   \STATE {\bfseries Preprocessing:} find $\rvx_i$ for each pair $(\rvy_i, \rmD_i)$ by solving \Eqref{eq:basis_pursuit_objective} using ISTA
   \STATE {\bfseries Goal:} learn $\rm\Theta = (\rmW_1,\rmW_2,\theta_k, \gamma_k)$
   \STATE {\bfseries Init:} $\rmW_1,\rmW_2 = \rmI$, $\theta_k, \gamma_k = 1$
   \FOR{each batch $\{\rvy_i, \rmD_i, \rvx_i\}_{i=1}^{N_B}$}
   \STATE update $\rm\Theta$ by $\partial_{\rm\Theta} \sum_{i \in N_B} \norm{ \gF_K(\rvy_i, \rmD_i; \rm\Theta) - \rvx_i }_2^2 $
   \ENDFOR
\end{algorithmic}
\end{algorithm}

Thus far, as depicted in \Figref{fig:rate_adaptation}, one could either benefit from a high convergence rate using a learned solver as LISTA, while restricting the signals to a specific model, or employ a non-learned and less effective solver as ISTA/FISTA that is capable of handling any pair of signal and its generative model. 
In this paper we introduce a novel architecture, termed ``Adaptive-LISTA'' (Ada-LISTA), combining both benefits. Beyond enjoying the acceleration benefits of learned solvers, we incorporate the dictionary as part of the input at both training and inference time, allowing for adaptivity to different models. \Figref{fig:adalista_scheme} provides our suggested architecture, based on the following: 

\begin{definition}[Ada-LISTA]
\label{def:ada_lista}
    The Ada-LISTA solver is defined\footnote{Although the above definition corresponds to the sparse coding problem, the Ada-LISTA method can be applied to any convex problem formulated as \Eqref{eq:convex_problem}, by swapping the soft-threshold with a different proximal operator (\Eqref{eq:proximal_method}).} by the following iterative step:
    \begin{multline}
        \label{eq:ada_lista_def}
        \rvx_{k+1} = \gS_{\theta_{k+1}} \big( \left( \rmI - \gamma_{k+1} \rmD^T\rmW_1^T\rmW_1\rmD \right) \rvx_{k} \\
        + \gamma_{k+1} \rmD^T\rmW_2^T\rvy \big).
    \end{multline}
    The signal $\rvy$ and the dictionary $\rmD$ are the inputs, and the learned parameters are $\rmW_1,\rmW_2 \in \R^{n \times n}$ and $\{\gamma_k,\theta_k\}$.
\end{definition}

The inference (for both ISTA and FISTA) and the training procedures of Ada-LISTA are detailed in Algorithms \ref{alg:adalfista} and \ref{alg:adalfista_training} correspondingly. We consider a similar loss as in \Eqref{LISTA-loss}, while also incorporating the concurrent dictionaries,
\begin{equation}\label{Ada-LISTA-loss}
    \minimize_{\rm\Theta} \sum_{i=1}^N \left\|\gF_K(\rvy_i, \rmD_i; \rm\Theta) - \rvx_i \right\|_2^2.
\end{equation}
This learning regime is supervised, requiring reference representations $\rvx_i $ to be computed using ISTA. An unsupervised alternative can be envisioned, as in \cite{sprechmann2015learning,golts2018deep}, where the loss is 
\begin{equation*}\label{Ada-LISTA-loss-UN}
    \min_{\rm\Theta}  \sum_{i=1}^N \left\|\rvy_i - \rmD_i \gF_K(\rvy_i, \rmD_i; \rm\Theta) \right\|_2^2
    + \lambda \left\| \gF_K(\rvy_i, \rmD_i; \rm\Theta) \right\|_1.
\end{equation*}
In this paper we shall focus on the supervised mode of learning, leaving the unsupervised alternative to future work. 

Several key questions arise on the applicability of the above learned solver: Does it work? and if so, is performance compromised by Ada-LISTA, as opposed to training LISTA for each separate model? To what extent can it be used? Can it handle completely random models? Can theoretical guarantees be provided on its convergence rate, or adaptation capability? We aim to answer these questions, and we start with a theorem on the robustness of our scheme by proving \emph{linear rate} convergence under \emph{varying} model. 


\section{Ada-LISTA: Theoretical Study}

For the following study, we consider a reduced scheme of Ada-LISTA with a single weight matrix, so as to avoid complication in theorem conditions. We emphasize, however, that the same claims can be derived for our original scheme. 

\begin{definition}[Ada-LISTA -- Single Weight Matrix]
\label{def:adalista_one}
    Ada-LISTA with a single weight matrix is defined by
    \begin{equation}
        \label{eq:ada_lista_one_matrix}
        \rvx_{k+1} = \gS_{\theta_{k+1}} ( \rvx_k + \rmD^T\rmW^T (\rvy-\rmD\rvx_k) ).
    \end{equation}
\end{definition}

We start by recalling the definition of mutual coherence between two matrices:
\begin{definition}[Mutual Coherence]
    Given two matrices, $\rmA$ and $\rmB$, if the diagonal elements of $\rmA^T\rmB$ are equal to $1$, then the mutual coherence is defined as
    \begin{equation}
        \mu(\rmA, \rmB) = \max_{i \neq j} | \rva^T_{i} \rvb_j |,
    \end{equation}
    where $\rva_i$ and $\rvb_j$ are the $i$th and $j$th columns of $\rmA$ and $\rmB$.
\end{definition}

Our first goal is to prove that Ada-LISTA is capable of solving the sparse coding problem in linear rate. We show that if all the signals emerge from the same dictionary $\rmD$, there exists a weight matrix $\rmW$ and threshold values such that the recovery error decreases linearly over iterations. The following theorem indicates that if Ada-LISTA's training reaches its global minimum, the rate would be at least linear. In this part, we follow the steps in \cite{zarka2019deep}, which generalize the proof of ALISTA \cite{liu2018alista} to noisy signals. The proof for Theorem \ref{th:fixed_model} appears in Appendix \ref{ap:proof_1}.

\begin{theorem}[Ada-LISTA Convergence Guarantee]
\label{th:fixed_model}
    Consider a noisy input $\rvy = \rmD \rvx^* + \rve$. If $\rvx^*$ is sufficiently sparse, 
    \begin{equation}
        s = \norm{\rvx^*}_0 < \frac{1}{2\tmu}, \quad \tmu \triangleq \mu(\rmW\rmD, \rmD),
    \end{equation}
    and the thresholds satisfy the condition
    \begin{equation}
        \theta_k =  \thmax\, \gamma^{-k} > \thmin = \frac{\|\rmA^T \rve \|_\infty } {1 - 2 \gamma \tmu s},
    \end{equation}
    with $1 < \gamma < (2 \tmu s)^{-1}$, $\rmA \triangleq \rmW \rmD$, and 
    $\thmax \geq \|\rmA^T \rvy\|_\infty$, then the support in the $k$th iteration of Ada-LISTA (Definition \ref{def:adalista_one}) is included in the support of $\rvx^*$, and its values satisfy
    \begin{equation}
        \|\rvx_k - \rvx^* \|_\infty \leq 2 \,\thmax \,\gamma^{-k}.
    \end{equation}
\end{theorem}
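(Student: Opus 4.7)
The plan is to proceed by induction on the iteration index $k$, following the coherence-based support recovery template of ALISTA \cite{liu2018alista} as adapted to the noisy setting in \cite{zarka2019deep}. Setting $\rmA = \rmW\rmD$ and using $\rvy = \rmD\rvx^* + \rve$, I would first rewrite the Ada-LISTA update as $\rvx_{k+1} = \gS_{\theta_{k+1}}(\rvu_k)$ with
$$\rvu_k = \rvx_k + \rmA^T \rmD (\rvx^* - \rvx_k) + \rmA^T\rve.$$
The key algebraic consequence of the mutual coherence assumption (diagonal of $\rmA^T\rmD$ equal to one, off-diagonals bounded by $\tmu$) is that at coordinate $i$, $\rvu_k$ equals $x^*_i$ plus a cross-talk term $\sum_{j\neq i}(\rmA^T\rmD)_{ij}(x^*_j - x_{k,j})$ plus the noise term $(\rmA^T\rve)_i$.

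The inductive hypothesis has two parts: (i) $\Supp(\rvx_k) \subseteq \Supp(\rvx^*)$, and (ii) $\|\rvx_k - \rvx^*\|_\infty \leq 2\thmax\gamma^{-k}$. For the inductive step I would split into two cases. For $i \notin \Supp(\rvx^*)$, hypothesis (i) implies that $\rvx^* - \rvx_k$ is supported on at most $s$ indices all distinct from $i$, so the cross-talk at $i$ is bounded by $\tmu s\|\rvx^* - \rvx_k\|_\infty$. Combined with (ii) and the definition $\theta_{k+1} = \thmax\gamma^{-(k+1)}$, the inequality $|u_{k,i}| \leq \theta_{k+1}$ reduces to $\|\rmA^T\rve\|_\infty \leq \theta_{k+1}(1 - 2\gamma\tmu s)$, which is exactly the condition $\theta_{k+1} > \thmin$ once we note that $\gamma < (2\tmu s)^{-1}$ makes the factor $1-2\gamma\tmu s$ positive. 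This forces $x_{k+1,i} = 0$ and hence preserves (i).

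For $i \in \Supp(\rvx^*)$, the same coherence bound yields $|u_{k,i} - x^*_i| \leq \tmu(s-1)\|\rvx^* - \rvx_k\|_\infty + \|\rmA^T\rve\|_\infty$. Using the elementary bound $|\gS_\theta(u) - u| \leq \theta$ for soft-thresholding, I get $|x_{k+1,i} - x^*_i| \leq |u_{k,i} - x^*_i| + \theta_{k+1}$, and plugging in hypothesis (ii) together with the inequality on $\|\rmA^T\rve\|_\infty$ derived above, the right-hand side collapses to exactly $2\theta_{k+1} = 2\thmax\gamma^{-(k+1)}$, establishing (ii) at step $k+1$. The base case $k = 0$, where $\rvx_0 = \vzero$, satisfies (i) trivially and (ii) is established from $\thmax \geq \|\rmA^T\rvy\|_\infty$ by expanding $\rmA^T\rvy = \rmA^T\rmD\rvx^* + \rmA^T\rve$ entrywise on $\Supp(\rvx^*)$, isolating $x^*_i$ via the unit diagonal, and using $s\tmu < 1/2$ to solve for $\|\rvx^*\|_\infty$.

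The main obstacle I anticipate is the bookkeeping around $\rmA^T\rve$: the threshold schedule must simultaneously be large enough at every iteration to kill off-support spillover and small enough that the exponential decay $\gamma^{-k}$ is not illusory, and the admissible interval $(\thmin, \thmax]$ must be nonempty and compatible with $1 < \gamma < (2\tmu s)^{-1}$. Getting the constants to collapse to exactly $2\theta_{k+1}$ rather than a weaker bound hinges on the specific prescription $\thmin = \|\rmA^T\rve\|_\infty/(1 - 2\gamma\tmu s)$, so the verification that both the off-support and on-support cases close under the \emph{same} inequality is the delicate part of the argument.
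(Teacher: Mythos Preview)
Your proposal is correct and matches the paper's proof essentially step for step: both use induction on $k$ with the two-part hypothesis $\Supp(\rvx_k)\subseteq\Supp(\rvx^*)$ and $\|\rvx_k-\rvx^*\|_\infty\le 2\theta_k$, handle the base case by isolating $x^*_i$ from $\rmA^T\rvy$ via the unit diagonal of $\rmA^T\rmD$, and for the inductive step bound the off-support argument by $2\tmu s\theta_k + \|\rmA^T\rve\|_\infty \le \theta_{k+1}$ and the on-support error via the soft-threshold inequality $|\gS_\theta(x_1+x_2)-x_1|\le \theta+|x_2|$. The only cosmetic differences are that you use $s-1$ rather than $s$ in the on-support cross-talk bound (slightly tighter, but then the sum does not collapse to \emph{exactly} $2\theta_{k+1}$ as you claim---it is strictly less, which is of course still fine) and you phrase the soft-threshold bound as $|\gS_\theta(u)-u|\le\theta$ plus triangle inequality rather than directly as in the paper.
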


We proceed by claiming that Ada-LISTA can be adaptive to model variations. In this setting, we argue that the signal can originate from different models, and nonetheless there exist global parameters such that Ada-LISTA will converge in linear rate to the original representation. 
Our Theorem exposes the key idea that, as opposed to LISTA which corresponds to a single dictionary, Ada-LISTA can be flexible to various models, while still providing good generalization. Appendix \ref{ap:proof_dictionaries_set} contains the proof of the following Theorem. 

\begin{theorem}[Ada-LISTA -- The Applicable Dictionaries]
\label{th:dictionaries_set}
    Consider a trained Ada-LISTA network with a fixed $\rmW$, and noisy input $\rvy = \rmD \rvx^* + \rve$.
    If the following conditions hold: 

    1. The diagonal elements of $\rmG \triangleq \rmD^T \rmW^T \rmD$ are close to $1$: $\max_i \abs{\emG_{i i} - 1} \leq \epsilon_d$;  
    
    2. The off-diagonals are bounded: $\max_{i \ne j} \abs{\emG_{i j}} \leq \bmu$;
    
    3. $\rvx^*$ is sufficiently sparse: $s = \norm{\rvx^*}_0 < \frac{1}{2 \bmu}$; and

    4. The thresholds satisfy
    \begin{equation}\nonumber
        \theta_k =  \thmax\, \gamma^{-k} > \thmin = \frac{\|\rmA^T \rve \|_\infty } {1 - 2\gamma\epsilon_d - 2 \gamma \bmu s},
    \end{equation}
    with $1 < \gamma < (2 \bmu s)^{-1}$, $\rmA \triangleq \rmW \rmD$, and $\thmax \geq \|\rmA^T \rvy\|_\infty$, 
    
    then the support of the $k$th iteration of Ada-LISTA is included in the support of $\rvx^*$, and its values satisfy
    \begin{equation}
        \|\rvx_k - \rvx^* \|_\infty \leq 2 \,\thmax \,\gamma^{-k}.
    \end{equation}
\end{theorem}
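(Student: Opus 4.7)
My plan is to follow the strategy of Theorem~\ref{th:fixed_model}, but to carry the extra diagonal slack $\epsilon_d$ through every step. Setting $\rmA \triangleq \rmW\rmD$ so that $\rmG = \rmA^T \rmD$, and substituting $\rvy = \rmD\rvx^* + \rve$ into the single-weight iteration of Definition~\ref{def:adalista_one}, a short algebraic manipulation puts the update in the residual form
\begin{equation*}
    \rvx_{k+1} = \gS_{\theta_{k+1}}\bigl( \rvx^* + (\rmI - \rmG)(\rvx_k - \rvx^*) + \rmA^T \rve \bigr).
\end{equation*}
This is the workhorse identity: it decomposes the pre-shrinkage input as the target $\rvx^*$ plus a model-mismatch term driven by $\rmI - \rmG$ and a noise term $\rmA^T \rve$.

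I would then prove by simultaneous induction on $k$ that (i) $\Supp(\rvx_k) \subseteq \Supp(\rvx^*)$ and (ii) $\|\rvx_k - \rvx^*\|_\infty \leq 2\thmax \gamma^{-k}$. The base case $k = 0$ uses $\rvx_0 = \rvzero$ together with $\thmax \geq \|\rmA^T \rvy\|_\infty$; the $\ell_\infty$ bound on $\rvx^*$ is obtained by reading off the coordinates of $\rmA^T \rvy = \rmG \rvx^* + \rmA^T \rve$ on $\Supp(\rvx^*)$ and invoking $|G_{ii}-1|\leq \epsilon_d$, $|G_{ij}|\leq \bmu$, and the sparsity bound $s < (2\bmu)^{-1}$.

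For the inductive step I would split by whether the coordinate $i$ lies on $\Supp(\rvx^*)$. For $i \notin \Supp(\rvx^*)$, the induction hypothesis forces $(\rvx_k - \rvx^*)_i = 0$, so only off-diagonal entries of $\rmI - \rmG$ contribute and
\begin{equation*}
    \abs{\bigl((\rmI - \rmG)(\rvx_k - \rvx^*) + \rmA^T \rve\bigr)_i} \leq \bmu\, s\, \|\rvx_k - \rvx^*\|_\infty + \|\rmA^T \rve\|_\infty \leq 2\bmu s\, \thmax \gamma^{-k} + \|\rmA^T \rve\|_\infty,
\end{equation*}
and the threshold schedule is chosen precisely so that this is $\leq \theta_{k+1}$, killing the coordinate after soft-thresholding and preserving (i). For $i \in \Supp(\rvx^*)$, the diagonal term $(1 - G_{ii})$ now contributes the extra factor $\epsilon_d$, giving a distortion bounded by $2(\epsilon_d + \bmu s)\thmax \gamma^{-k} + \|\rmA^T \rve\|_\infty$; combining with the standard $\abs{\gS_\theta(\rvx^*_i + d) - \rvx^*_i} \leq \abs{d} + \theta$ inequality (itself a consequence of the 1-Lipschitzness of $\gS_\theta$ together with $\abs{\gS_\theta(\rvx^*_i) - \rvx^*_i} \leq \theta$) upgrades the bound to $2\thmax \gamma^{-(k+1)}$ and closes (ii).

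The main obstacle, and the essential difference from Theorem~\ref{th:fixed_model}, is that the diagonal slack $\epsilon_d$ of $\rmG$ (which vanishes under the mutual-coherence normalization used there) now contributes the additional term $2\gamma \epsilon_d$ to the denominator $1 - 2\gamma \epsilon_d - 2\gamma \bmu s$ defining $\thmin$. Closing the induction amounts to rearranging the required inequality
\begin{equation*}
    2(\epsilon_d + \bmu s)\thmax \gamma^{-k} + \|\rmA^T \rve\|_\infty \leq \thmax \gamma^{-(k+1)}
\end{equation*}
into the equivalent condition $\theta_{k+1} \geq \thmin$, which is exactly the theorem's hypothesis $\theta_k = \thmax \gamma^{-k} > \thmin$ shifted by one index. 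Verifying that the stated $\gamma < (2\bmu s)^{-1}$ combined with smallness of $\epsilon_d$ keeps that denominator strictly positive is the only delicate piece of bookkeeping; everything else follows the template of Theorem~\ref{th:fixed_model}.
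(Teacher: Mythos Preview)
Your proposal is correct and follows essentially the same route as the paper's proof: both argue by simultaneous induction on the support inclusion and the $\ell_\infty$ error, split the inductive step according to whether $i \in \Supp(\rvx^*)$, and absorb the extra diagonal slack $\epsilon_d$ into the modified threshold condition $\theta_{k+1} > \thmin = \|\rmA^T\rve\|_\infty / (1 - 2\gamma\epsilon_d - 2\gamma\bmu s)$. Your compact residual identity $\rvx_{k+1} = \gS_{\theta_{k+1}}\bigl(\rvx^* + (\rmI - \rmG)(\rvx_k - \rvx^*) + \rmA^T\rve\bigr)$ is simply a matrix-form repackaging of the paper's coordinate-wise computation, and your final rearrangement into $\theta_{k+1} \geq \thmin$ is exactly the bookkeeping the paper performs.
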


An interesting question arising is the following: Once Ada-LISTA has been trained and the matrix $\rmW$ is fixed, which dictionaries can be effectively served with the same parameters, without additional training? Theorem \ref{th:dictionaries_set} reveals that as long as the effective matrix $\rmG = \rmD^T \rmW^T \rmD$ is sufficiently close to the identity matrix, linear convergence is guaranteed. This holds in particular for two interesting scenarios, proven in Appendices \ref{ap:proof_permutation} and \ref{ap:proof_noisy_dict}:
\begin{enumerate}
    \item \textbf{Random permutations} -- If Ada-LISTA converges for signals emerging from $\rmD$, it also converges for signals originating from any permutation of $\rmD$'s atoms.
    \item \textbf{Noisy dictionaries} -- If Ada-LISTA converges given a clean dictionary $\rmD$, satisfying $\mu(\rmW\rmD, \rmD) < \bmu$, it also converges for noisy models $\trmD = \rmD + \rmE$, with some probability, depending on the distribution of $\rmE$.
\end{enumerate}

To the best of our knowledge, Theorem \ref{th:dictionaries_set} provides the first convergence guarantee in the presence of \emph{model variations}, claiming that linear rate convergence is guaranteed, depending on the availability of small enough cardinality and low mutual-coherence $\tmu$. 
Note that the above claim, as in previous work  \cite{liu2018alista,zarka2019deep}, addresses the core capability of reaching linear convergence rate while disregarding both training and generalization errors.


\section{Related Work}
\label{sec:related_work}

As already mentioned, the literature discussing LISTA and its successors, is abundant. In this section we aim to discuss relevant work to provide better context to our contribution.

The most relevant work to ours is  ``robust-ALISTA'' \cite{liu2018alista}, introducing adaptivity to dictionary perturbations. Their work assumes that every signal $\rvy_i$ comes from a different noisy model $\trmD_i = \rmD + \rmE_i$, where $\rmE_i$ is an interference matrix. For each noisy dictionary $\trmD_i$ this method computes an analytic matrix $\trmW_i$ that minimizes the mutual coherence $\mu(\trmW_i, \trmD_i)$. Then, $\trmW_i$ and $\rmD$ are embedded in the architecture, and the training is performed over the step sizes and the thresholds only, leading to a considerable reduction in the number of trained parameters.

While Robust-ALISTA considers model perturbations only, we show empirically that our method can handle more complicated model deviations, as dictionary permutations and  totally random models. Additionally, in terms of computational complexity, robust-ALISTA has a complicated calculation of the analytic matrices during inference time, a limitation that does not exist in our scheme. We refer the reader to Appendix \ref{app:alista} for a more detailed discussion on the difference between both methods.

As to the theoretical aspect of our study, \cite{LISTA_CPSS,liu2018alista,zarka2019deep,wu2020sparse} have recently shown that learned solvers can achieve linear convergence, under specific conditions on the sparsity level and mutual coherence. These results are the inspiration behind Theorem \ref{th:fixed_model}. This work, however, generalizes these guarantees to a varying model scenario, proving that the same weight matrix can serve different models while still reaching linear convergence.


\section{Numerical Results}

To demonstrate the effectiveness of our approach, we perform extensive numerical experiments, where our goal is two-fold. First we examine Ada-LISTA on a variety of synthetic data scenarios, including column permutations of the input dictionary, additive noisy versions of it, and completely random input dictionaries. Second, we perform a natural image inpainting experiment, showcasing our robustness to a real-world task\footnote{The code for reproducing all experiments is available at \url{https://github.com/aaberdam/AdaLISTA}.}.


\subsection{Synthetic Experiments}
\label{subsec:synthetic}
\paragraph{Experiment Setting.} We construct a dictionary $\rmD\in \R^{50 \times 70}$ with random entries drawn from a normal distribution, and normalize its columns to have a unit $L^2$-norm. Our signals $\rvy \in \R^{50}$ are created as sparse combinations of atoms over this dictionary, $\rvy=\rmD \rvx^*$. While the reported experiments in this section assume no additive noise, Appendix \ref{app:noisy_sig} presents a series of similar tests with varying levels of noise, showing the same qualitative results. The representation vectors $\rvx^* \in \R^{70}$ are created by randomly choosing a support of cardinality $s=4$ with Gaussian coefficients, $\rvx^*_{i \in \text{support}} \in \gN(0,1)$. Instead of using the true sparse representations $\rvx^*$ as ground truth for training, we compute the Lasso solution $\rvx$ with FISTA ($100$ iterations, $\lambda=1$), using the obtained signals $\rvy$ and their corresponding dictionary $\rmD$. This is done in order to maintain a real-world setting, where one does not have access to the true sparse representations. We create in this manner $N=20,000$ examples for training, and $N_{\text{test}}=1,000$ for test. Our metric for comparison between different methods is the MSE (Mean Square Error) between the ground truth $\rvx$ and the predicted sparse representations at $K$ unfoldings, $\|\rvx - \rvx_K\|^2$. In all experiments, the Ada-LISTA weight matrices are both initialized as the identity matrix. In the following set of experiments we gradually diverge from the initial model, given by the dictionary $\rmD$, by applying different degradation/modifications to it.

\paragraph{Random Permutations.}  We start with a scenario in which the columns of the initial dictionary $\rmD$ are permuted randomly to create a new dictionary $\tilde{\rmD}$. This transformation can occur in the non-convex process of dictionary learning, in which different initializations might incur a  different order of the resulting atoms. Although the signals' subspace remains intact, learned solvers as LISTA where the dictionary is hard-coded during training, will most likely fail, as they cannot predict the updated support.  

Here and below, we compare the results of four solvers: ISTA, FISTA, Oracle-LISTA and Ada-LISTA, all versus the number of iterations/unfoldings, $K$. For each training example in ISTA, FISTA and Ada-LISTA, we create new instances of a permuted dictionary $\tilde{\rmD_i}$ and its corresponding true representation, $\rvx^*_i$. We then apply FISTA for $100$ iterations and obtain the ground truth representations $\rvx_i$ for the signal $\rvy_i=\tilde{\rmD} \rvx^*_i$. Then ISTA and FISTA are applied for only $K$ iterations to solve for the pairs $\{\rvy_i, \tilde{\rmD}_i\}$. Similarly, the supervised Ada-LISTA is given the ground truth $\{\rvy_i, \tilde{\rmD}_i, \rvx_i\}$ for training. In Oracle-LISTA \emph{we solve a simpler problem} in which the dictionary is fixed ($\rmD$) for all training examples $\{\rvy_i,\rvx_i\}$. The results in \Figref{fig:results_permutations} clearly show that Ada-LISTA is much more efficient compared to ISTA/FISTA, capable of mimicking the performance of the Oracle-LISTA, which considers a single constant $\rmD$.

\begin{figure}[t]
    \centering
    \includegraphics[width=0.85\linewidth]{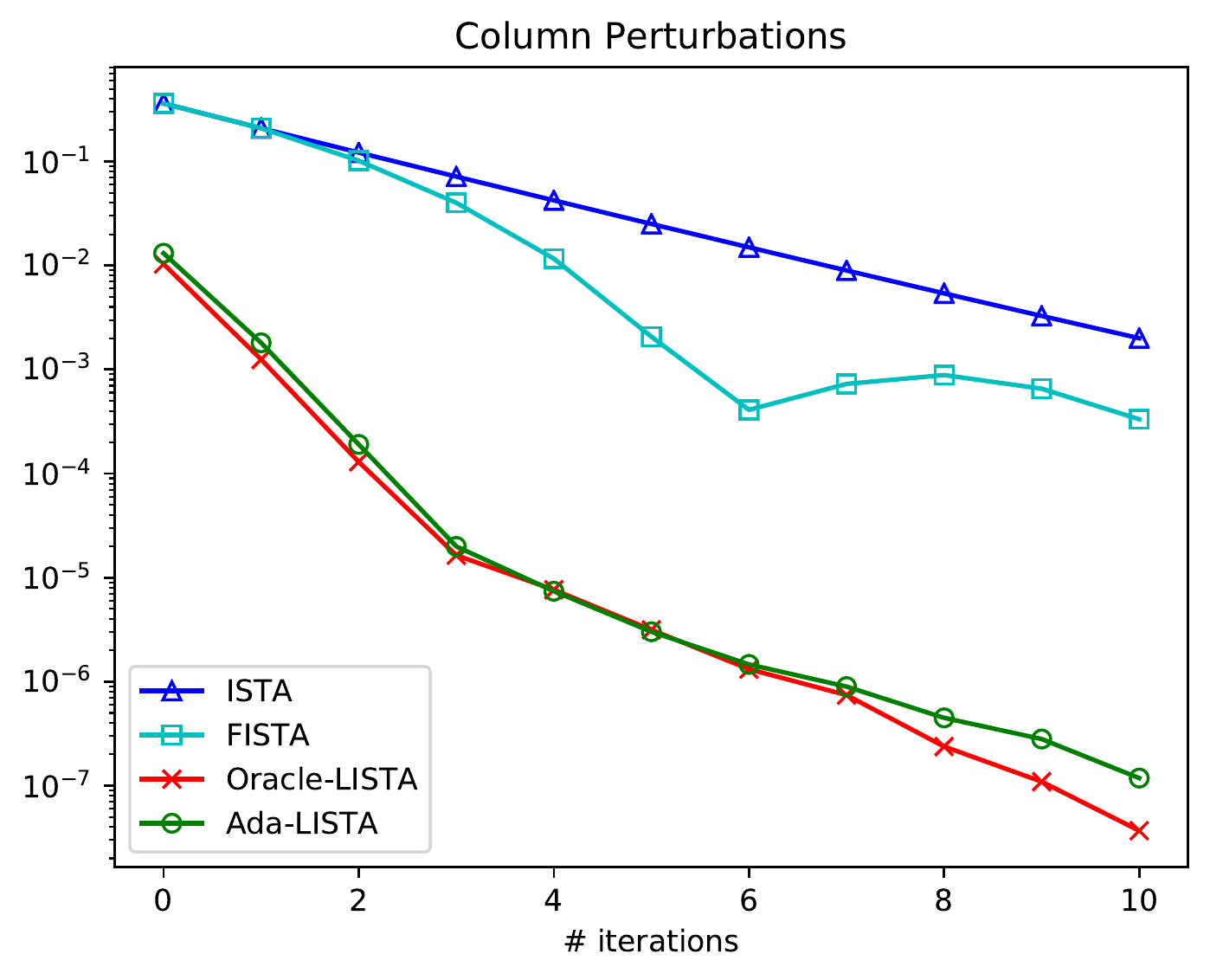}
    \caption{MSE performance under column permutations.}
    \label{fig:results_permutations}
\end{figure}

\paragraph{Noisy Dictionaries.} In this experiment we aim to show that Ada-LISTA can handle a more challenging case in which the dictionary varies by $\tilde{\rmD}_i = \rmD + \rmE_i$. Each training signal $\rvy_i$ is created by drawing a different noisy instance of the dictionary $\tilde{\rmD}_i$ and a sparse representation $\rvx^*_i$, and solving the FISTA to obtain $\rvx_i$. ISTA and FISTA receive the pairs $\{\rvy_i,\tilde{\rmD}_i\}$, and Ada-LISTA receives the triplet $\{\rvy_i,\tilde{\rmD}_i,\rvx_i\}$. By vanilla LISTA, we refer to a learned solver that obtains $\{\rvy_i,\rvx_i\}$, and trains a network while disregarding the changing models. Oracle-LISTA, as before, handles a simpler case in which the dictionary is fixed, being $\rmD$, and all signals are created from it.

Figure \ref{fig:results_noisy} presents the performance of the different solvers with a decreasing SNR (Signal to Noise Ratio) of the dictionary\footnote{Note that the noise is inflicted on the model (i.e., the dictionary) without an additive noise on the resulting signals}. The performance of  ISTA and FISTA is agnostic to the noisy model, since they do not require prior training. The Ada-LISTA again performs on-par with Oracle-LISTA, which has a prior knowledge of the dictionary. LISTA's performance, however, deteriorates with the decrease of the dictionary SNR. At $\text{SNR}=25$dB it still provides a computational gain over ISTA and FISTA, but loses its advantage for lower SNRs and higher number of iterations. 

\begin{figure*}[ht!]
    \centering
    \begin{subfigure}[t]{0.3\linewidth}
        \centering
        \includegraphics[width=1.05\linewidth]{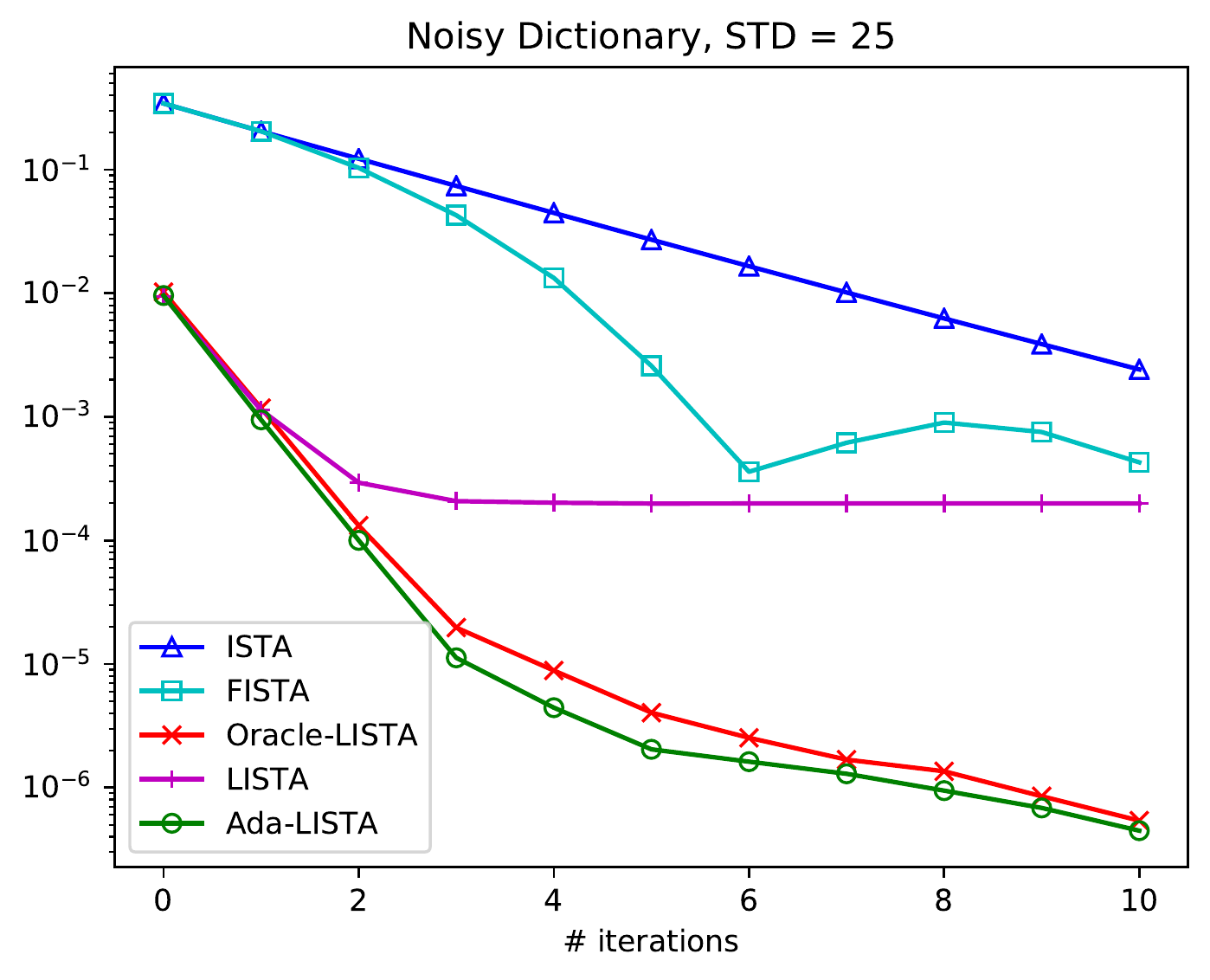}
        \caption{SNR of $25$[dB].}
        \label{fig:results_noisy_25}
    \end{subfigure}
    ~
    \begin{subfigure}[t]{0.3\linewidth}
        \centering
        \includegraphics[width=1.05\linewidth]{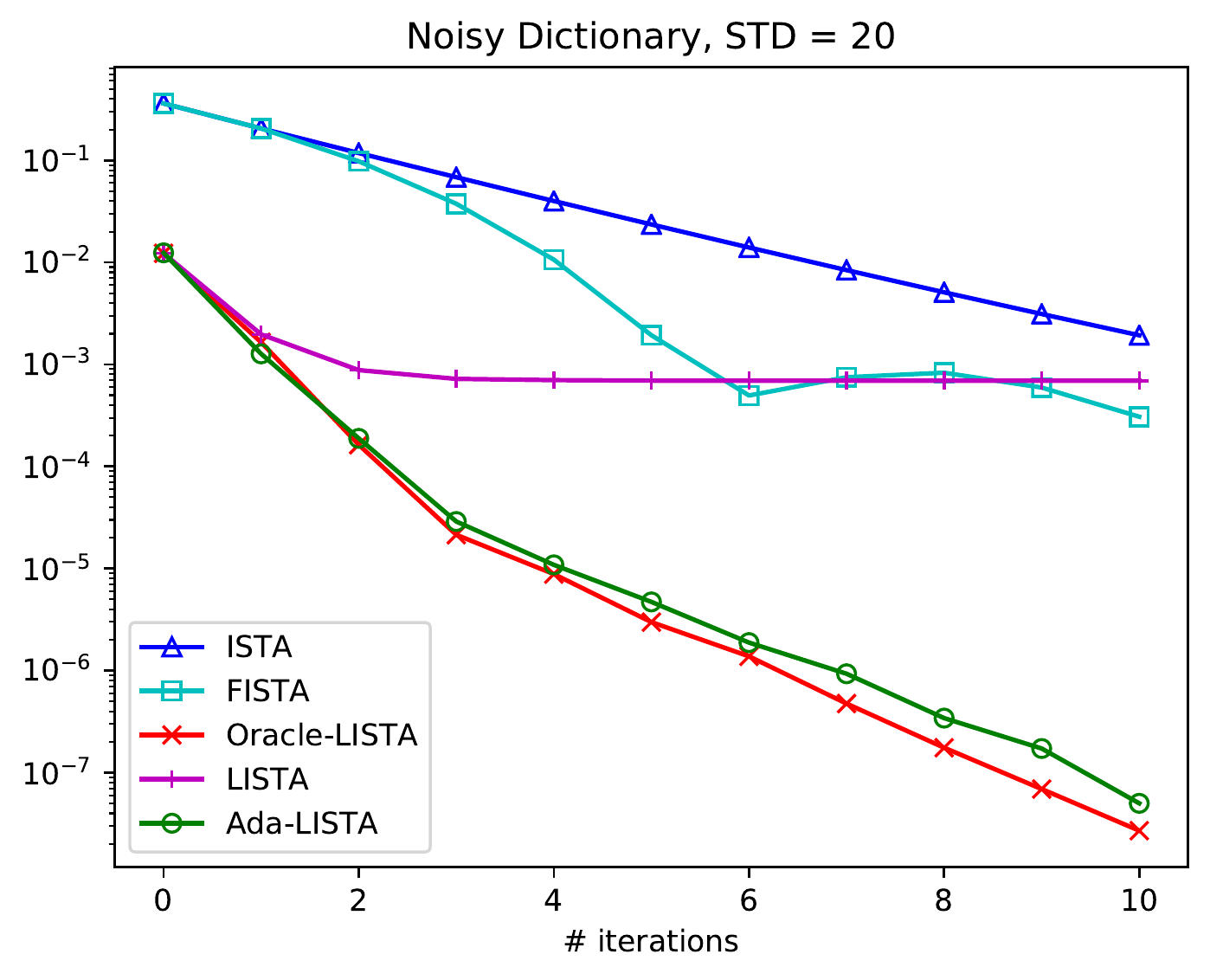}
        \caption{SNR of $20$[dB].}
        \label{fig:results_noisy_20}
    \end{subfigure}
    ~
    \begin{subfigure}[t]{0.3\linewidth}
        \centering
        \includegraphics[width=1.05\linewidth]{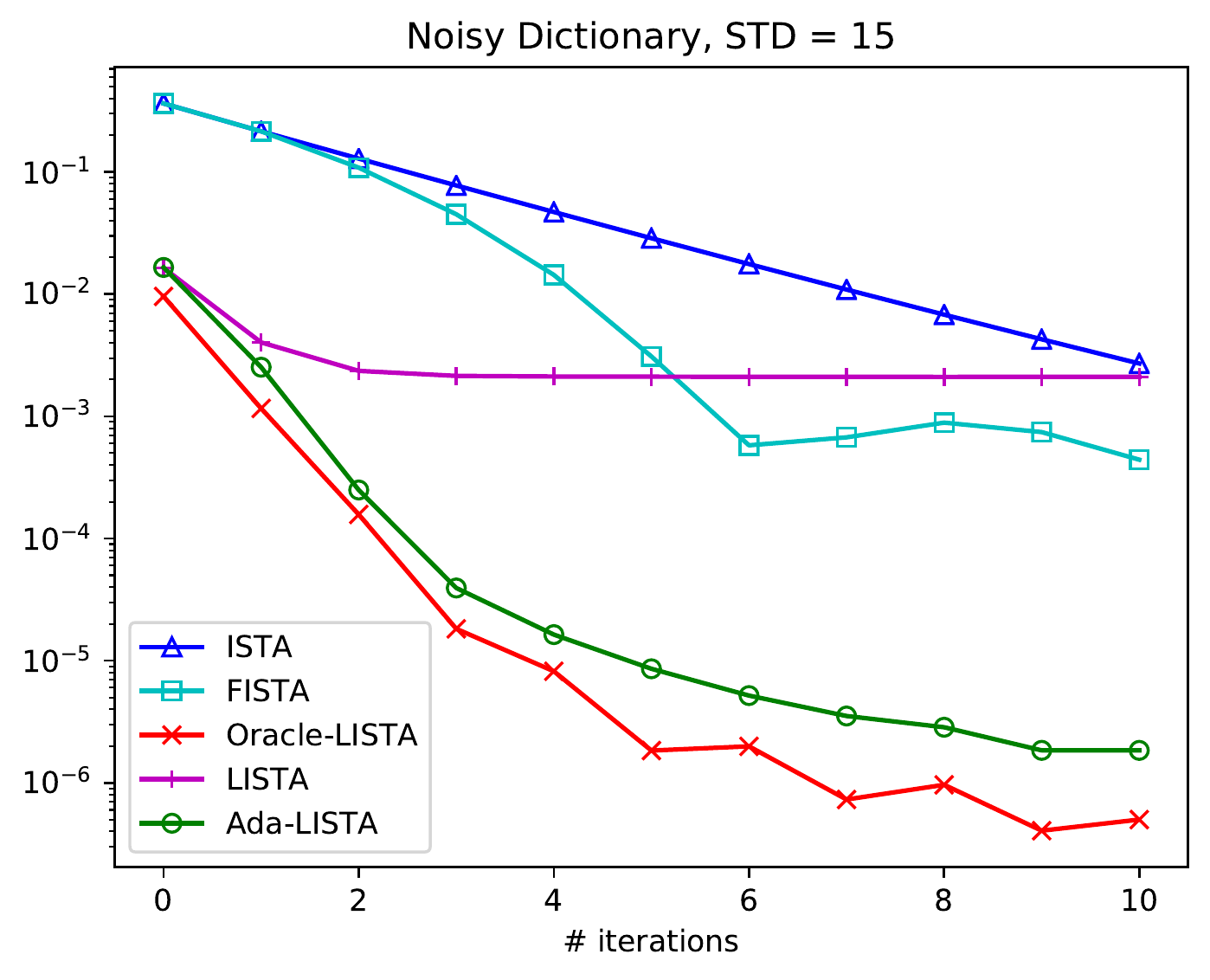}
        \caption{SNR of $15$[dB].}
        \label{fig:results_noisy_15}
    \end{subfigure}
    \caption{MSE performance for noisy dictionaries with decreasing SNR values.}
    \label{fig:results_noisy}
\end{figure*}

\begin{figure*}[ht!]
    \centering
    \begin{subfigure}[t]{0.3\linewidth}
        \centering
        \includegraphics[width=1.05\linewidth]{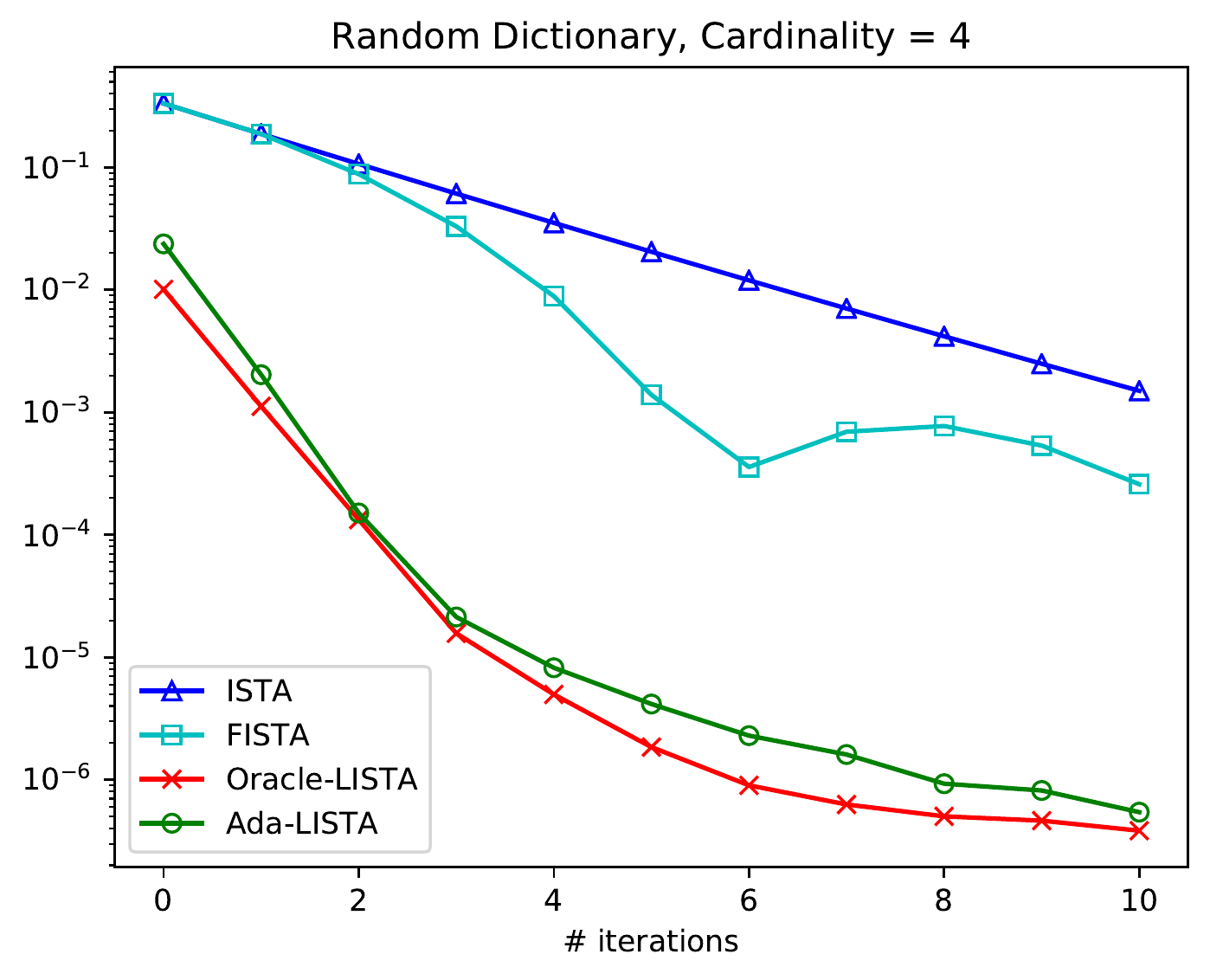}
        \caption{$s=4$.}
        \label{fig:results_random_s4}
    \end{subfigure}
    ~
    \begin{subfigure}[t]{0.3\linewidth}
        \centering
        \includegraphics[width=1.05\linewidth]{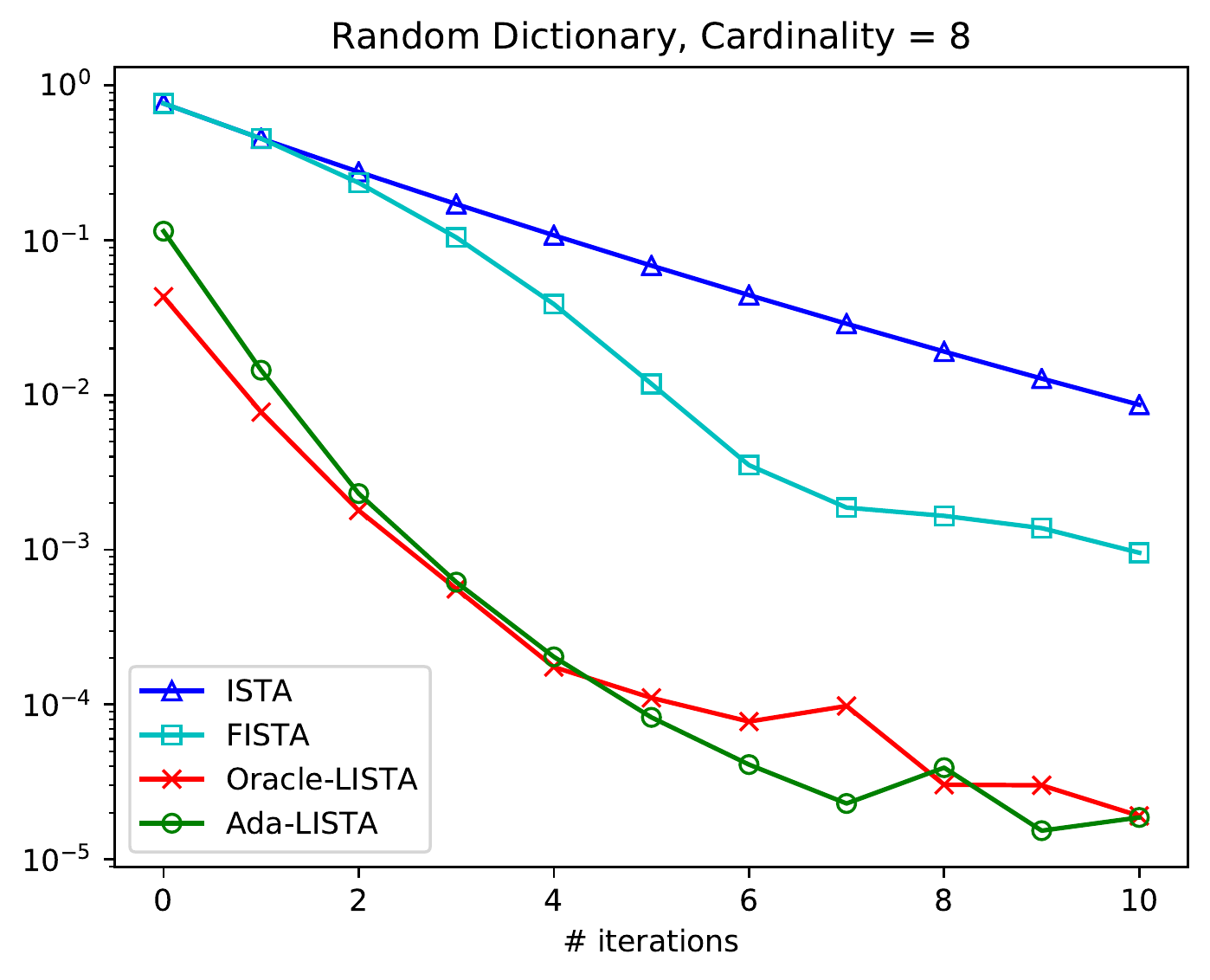}
        \caption{$s=8$.}
        \label{fig:results_random_s8}
    \end{subfigure}
    ~
    \begin{subfigure}[t]{0.3\linewidth}
        \centering
        \includegraphics[width=1.05\linewidth]{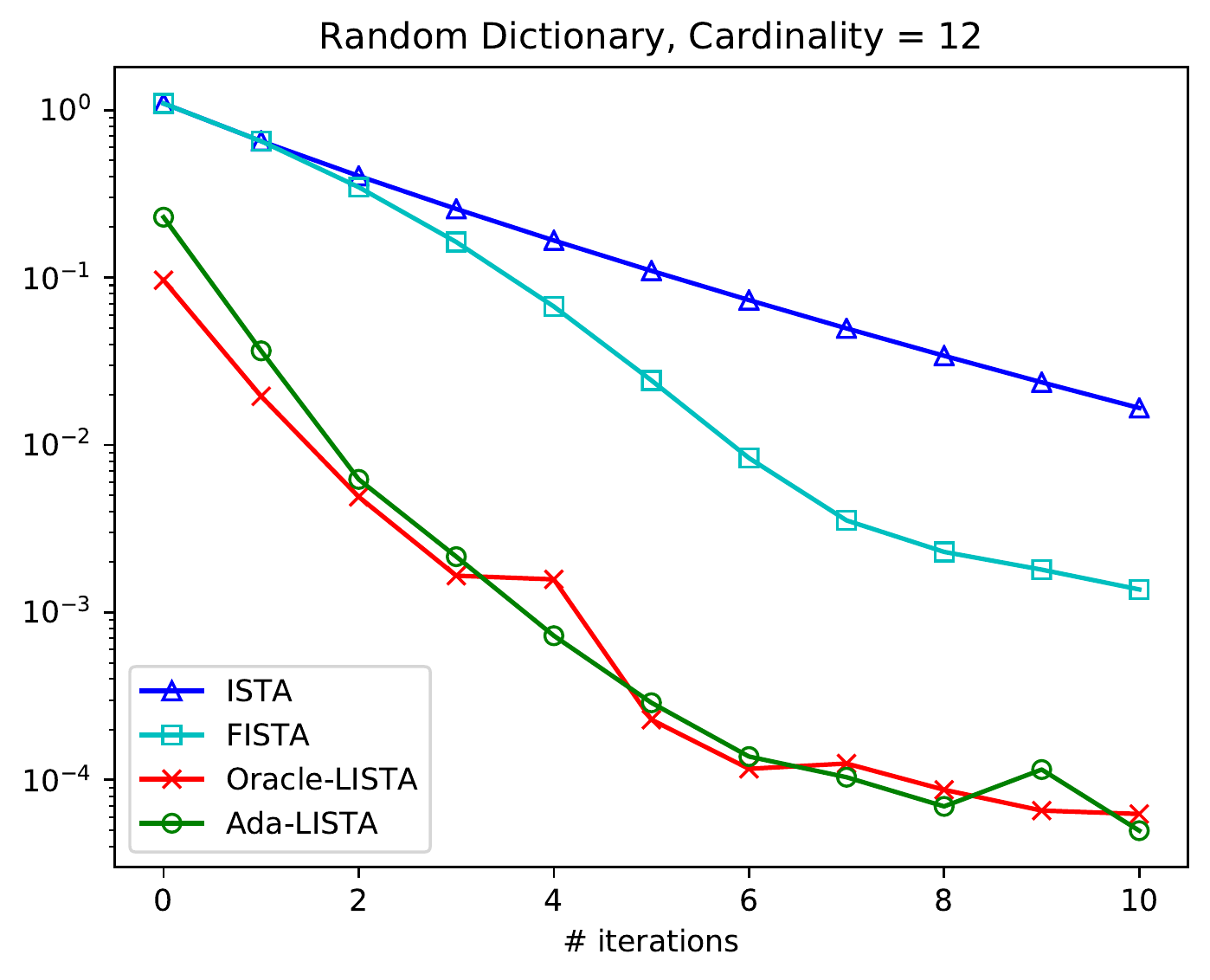}
        \caption{$s=12$.}
        \label{fig:results_random_s12}
    \end{subfigure}
    \caption{MSE performance for random dictionaries with increasing cardinality.}
    \label{fig:results_random}
\end{figure*}

\paragraph{Random Dictionaries.} In this setting, we diverge even further from a fixed model, and examine the capability of our method to handle completely random input dictionaries. This time, for each training example we create a different Gaussian normalized dictionary $\rmD_i$, and a corresponding representation vector with an increasing cardinality: $s=4,8,12$. The resulting signals, $\rvy_i=\rmD_i \rvx_i^*$, and their corresponding dictionaries are fed to FISTA to obtain the ground truth sparse representations for training, $\rvx_i$. We compare the performance of ISTA, FISTA, Ada-LISTA and Oracle-LISTA. Similarly to previous experiments, Ada-LISTA is fed during training with the triplet $\{\rvy_i,\rmD_i,\rvx_i\}_{i=1}^N$. Vanilla LISTA cannot handle such variation in the input distribution, and thus it is omitted. For reference, we show the results of Oracle-LISTA in which all of the training signals are created from the same dictionary.

As can be seen in Figure \ref{fig:results_random}, for a small cardinality of $s=4$, Oracle-LISTA is able to drastically lower the reconstruction error as compared to ISTA and FISTA. This result, however, has already been demonstrated in \cite{LISTA}. Ada-LISTA which deals with a much more complex scenario, still provides a similar  improvement over both ISTA and FISTA. As the cardinality increases to $s=8,12$, the performance of both learned solvers deteriorates, and the improvement over their non-learned counterparts diminishes. 

The last experiment provides a valuable insight on the success of LISTA-like learned solvers.
The common belief is that acceleration in convergence can be obtained when the signals are restricted to a union of low-dimensional subspaces, as opposed to the entire signal space. The above experiment suggests otherwise: Although the signals occupy the whole space, Ada-LISTA still achieves improved convergence. This implies that the underlying structure should be only of the \emph{signal given its generative model} $p(\rvy | \rmD)$, as opposed to the \emph{signal} model, $p(\rvy)$. In the above, even if the dictionaries are random, the signals must be \textit{sparse combinations of atoms}. As this assumption of structure weakens with the increased cardinality, the resulting acceleration becomes less prominent. We believe that this conditional information is the key for improved convergence.

\begin{figure*}[ht]
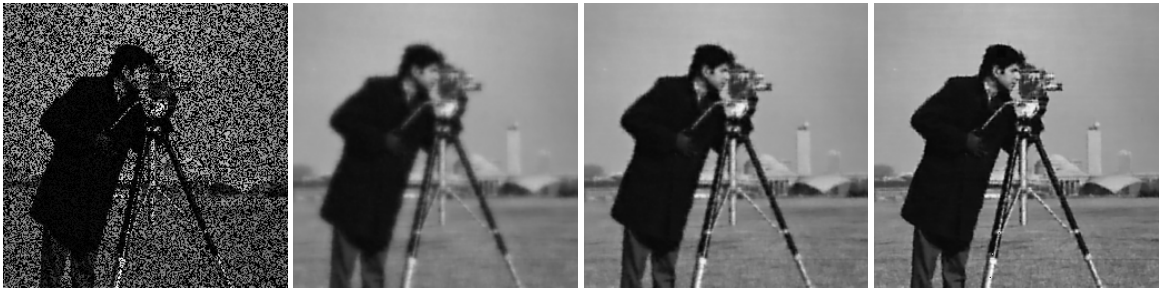

    \centering
    \inpaintingl{Cameraman_corrupt} \inpaintingl{Cameraman_ista}
    \inpaintingl{Cameraman_fista} \inpaintingl{Cameraman_adalista}
    \caption{Image inpainting with $50\%$ missing pixels. From left to right: corrupted image, ISTA, FISTA, and Ada-LISTA.}
    \label{fig:inpainting_cameraman}
\end{figure*}


\subsection{Natural Image Inpainting} \label{subsec:inpainting}
In this section we apply our method to a natural image inpainting task. We assume the image is corrupted by a known mask with a ratio of $p$ missing pixels. Thus, the updated objective we wish to solve is 
\begin{equation} \label{eq:inpainting_objective}
    \minimize_{\rvx} \frac{1}{2} \|\rvy - \rmM \rmD \rvx\|_2^2 + \lambda \|\rvx\|_1,
\end{equation}
where $\rvy \in \R^n$ is a corrupt patch of the same size as the clean one, $\rmD \in \R^{n \times m}$ is a dictionary trained on clean image patches, and $\rmM \in \R^{n \times n}$ represents the mask, being an identity matrix with a percentage of $p$ diagonal elements equal to zero. Thus, the dictionary is constant, but each patch has a different (yet known) inpainting mask, and thus the effective dictionary $\rmD_{\text{eff}}=\rmM \rmD$ changes for each signal.

\begin{table*}[t!]
\begin{center}
\resizebox{\textwidth}{!}{
\begin{tabular}{l c c c c c c c c c c c}
\hline
\hline
 & Barbara & Boat & House & Lena & Peppers & C.man & Couple & Finger & Hill & Man & Montage \\
\hline
\textit{ISTA} & $23.49$ & $25.40$ & $26.87$ & $27.83$ & $23.56$ & $22.72$ & $25.34$ & $20.63$ & $27.26$ & $26.34$ & $22.48$  \\
\textit{FISTA} & $24.93$ & $28.18$ & $30.53$ & $31.02$ & $26.75$ & $25.25$ & $28.09$ & $25.45$ & $29.64$ & $29.03$ & $25.08$ \\
\textit{Ada-LFISTA} & $\textbf{26.09}$ & $\textbf{30.03}$ & $\textbf{32.36}$ & $\textbf{32.50}$ & $\textbf{28.81}$ & $\textbf{27.94}$ & $\textbf{30.02}$ & $\textbf{28.25}$ & $\textbf{30.86}$ & $\textbf{30.67}$ & $\textbf{27.22}$ \\
\hline
\hline
\end{tabular}}
\end{center}
\caption{Image inpainting with $50\%$ missing pixels and $K=20$ unfoldings.}\label{tbl:inpainting_results}
\end{table*}
\vspace{-1mm}

\begin{figure}[t]
    \centering
    \includegraphics[width=0.85\linewidth]{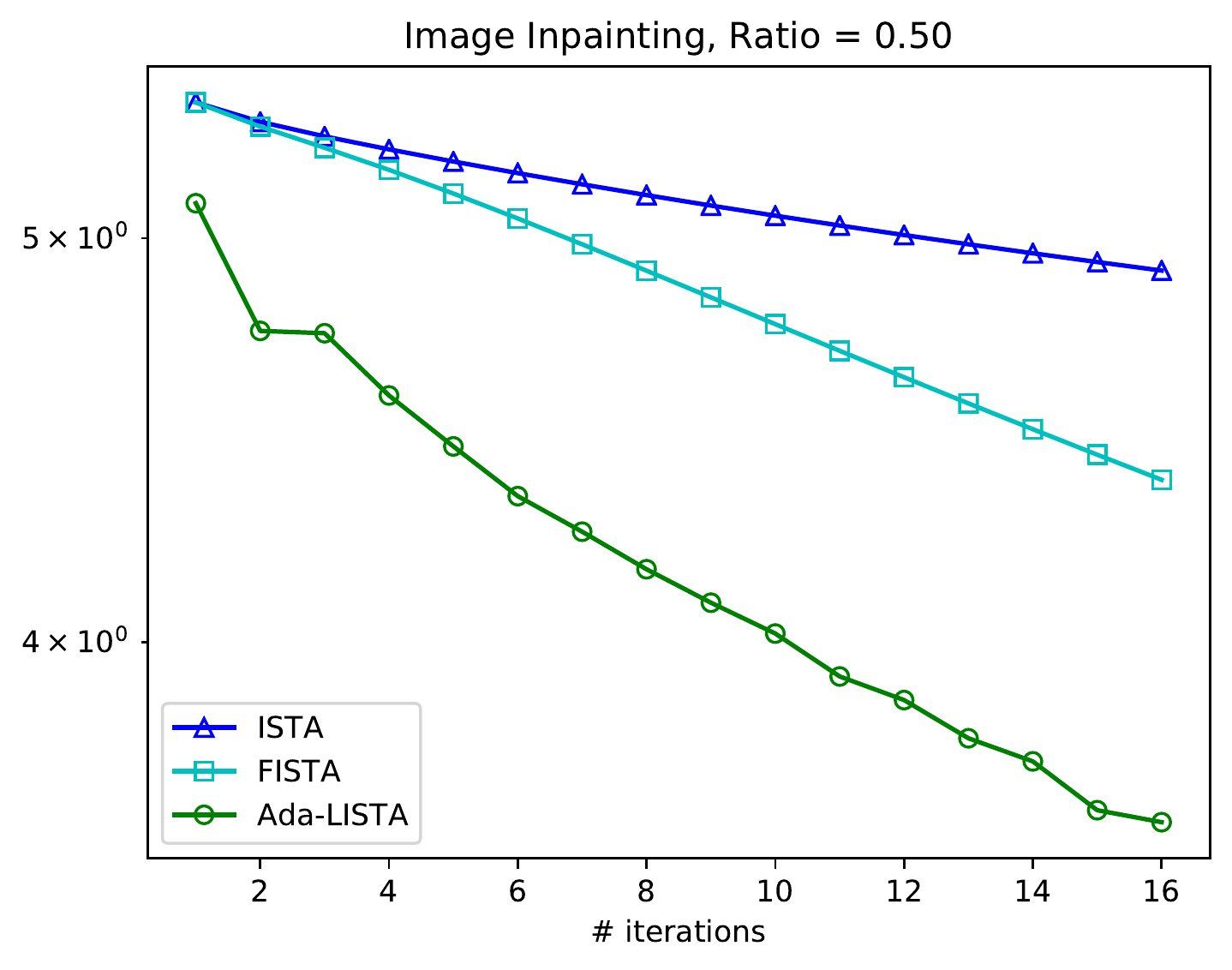}
    \caption{Patch-wise validation error versus unfoldings.}
    \label{fig:inpainting_convergence}
\end{figure}

\paragraph{Updated Model.} We slightly change the formulation of the model described in Section \ref{sec:our_method}, and reverse the roles of the input and learned matrices. Specifically, the updated shrinkage step (\Eqref{eq:ista_step}) for image inpainting is 
\begin{equation} \label{eq:inpainting_shrinkage}
    \rvx_{k+1}  = \gS_{\frac{\lambda}{L}}\left(\rvx_k + \frac{1}{L} \rmD^T \rmM^T (\rvy - \rmM \rmD \rvx_k) \right).
\end{equation}
We consider the mask $\rmM$ as part of the input, while the dictionary $\rmD$ is learned with the following parameterization:
\begin{equation}\label{eq:inpainting_model}
\begin{split} 
& \frac{1}{L} \rmD^T \rmM^T \rmM \rmD \rightarrow \gamma_{k+1} \rmW_1^T \rmM^T \rmM \rmW_1^T, \\
& \frac{1}{L} \rmD^T \rmM^T \rightarrow \gamma_{k+1} \rmW_2^T \rmM^T,
\end{split}
\end{equation}
where $\rmW_1,\rmW_2 \in \R^{n \times m}$ are the same size as the dictionary $\rmD$, and initialized by it.

\paragraph{Experiment Setting.} In order to collect natural image patches, we use the BSDS500 dataset \cite{BSDS500} and divide it to $400, 50$ and $50$ training, validation and test images correspondingly. To train the dictionary $\rmD$, we extract $100,000$ $8 \times 8$ patches at random locations from the train images, subtract their mean and divide by the average standard deviation. The dictionary of size $\rmD \in \R^{64 \times 256}$ is learned via \texttt{scikit-learn}'s function \texttt{MiniBatchDictionaryLearning} with $\lambda=0.1$. To train our network, we randomly pick a subset of $N=50,000$ training and $N_\text{val}=1,000$ validation patches. 
We train the network to perform an image inpainting task with ratio of $p=0.5$. Instead of using Ada-LISTA as before, we tweak the architecture described in Equation (\ref{eq:inpainting_model}) to unfold the FISTA algorithm, termed Ada-LFISTA, as described in algorithm \ref{alg:adalfista}. The input to our network is triplets $\{\rvy_i,\rmM_i,\rvx_i\}_{i=1}^N$ of the corrupt train patches $\rvy_i$, their corresponding mask $\rmM_i$, and the solutions $\rvx_i$ of the FISTA solver applied for $300$ iterations on the corrupt signals. The output is the reconstructed representations $\rvx_{K_i}$. 

We evaluate the performance of our method on images from the popular \texttt{Set11}, corrupted with the same inpainting ratio of $p=0.5$, and compare between ISTA, FISTA and Ada-LFISTA for a fixed number of $K=20$ iterations/unfoldings. We extract all overlapping patches in each image, subtract the mean and divide by the standard deviation, apply each solver, un-normalize the patches and return their mean, and then place them in their correct position in the image and average over overlaps. The quality of the results is measured in PSNR between the clean images and the reconstruction of their corrupt version. The patch-wise validation error versus the the number of unfoldings is given in Figure \ref{fig:inpainting_convergence}; numerical results are given in Table \ref{tbl:inpainting_results}, and select qualitative results are shown in Figure \ref{fig:inpainting_cameraman} and more in Appendix \ref{app:inpainting}. There is a clear advantage to Ada-LFISTA over the non-learned ISTA and FISTA solvers. In this setting of $50\%$ missing pixels, a hard-coded solver with a fixed $\rmD$, such as LISTA, cannot deal with the changing mask of each patch. 

\section{Conclusions}

We have introduced a new extension of LISTA, termed Ada-LISTA, which receives both the signals and their dictionaries as input, and learns a universal architecture that can cope with the varying models. This modification produces great flexibility in working with changing dictionaries, leveling the playing field with non-learned solvers such as ISTA and FISTA that are agnostic to the entire signal distribution, while enjoying the acceleration and convergence benefits of learned solvers. We have substantiated the validity of our method, both in a comprehensive theoretical study, and with extensive synthetic and real-world experiments. Future work includes further investigation of the discussed rationale, and an extension to additional applications.

\bibliography{bib}
\bibliographystyle{icml2020}


\appendix
\section{Proof of Theorem \ref{th:fixed_model}}
\label{ap:proof_1}

\begin{proof}
This proof follows the steps from \cite{zarka2019deep}, with slight modifications to fit our scheme. 
Following the notations in Theorem \ref{th:fixed_model}, $\rvx^*$ denotes the true sparse representation of the signal $\rvy$, and $\rmA=\rmW\rmD$. In addition, we define $\Supp(\cdot)$ as the support of a vector.

\paragraph{Induction hypothesis:} 
For any iteration $k\geq 0$ the following hold
\begin{enumerate}
    \item The estimated \emph{support} is contained in the true support,
    \begin{equation}\label{eq:induction_hyp_supp}
        \Supp(\rvx_k) \subseteq \Supp (\rvx^*).
    \end{equation}
    \item The \emph{recovery error} is bounded by
    \begin{equation}\label{eq:induction_hyp_recover}
        \|\rvx_k - \rvx^* \|_\infty \leq 2  \theta_k.
    \end{equation}
\end{enumerate}

\paragraph{Base case:}
We start by showing that the induction hypothesis holds for $k=0$. Since $\rvx_0 = \rvzero$ we get that the support is empty and the support hypothesis \Eqref{eq:induction_hyp_supp} holds. As for the recovery error, we get that
\begin{equation}
    \|\rvx_0 - \rvx^* \|_\infty = \|\rvx^*\|_\infty.
\end{equation}
Therefore, to verify \Eqref{eq:induction_hyp_recover} we need to show that
\begin{equation}
    \|\rvx^*\|_\infty \leq 2 \theta_0 = 2 \thmax.
\end{equation}
Since $\rvy = \rmD \rvx^* + \rve$, for any index $i$ we can write
\begin{equation}
    \rvy = \rvd_i \rvx^*[i] + \sum_{j \ne i} \rvd_j \rvx^*[j] + \rve,
\end{equation}
where $\rvd_i$ denotes the $i$th column in $\rmD$ and $\rvx[i]$ denotes the $i$th element in $\rvx$.
Multiplying each side by $\rva_i^T$ we get
\begin{equation}
    \rvx^*[i] \rva_i^T \rvd_i = \rva_i^T \rvy - \sum_{j \ne i} \rvx^*[j] \rva_i^T \rvd_j - \rva_i^T \rve.
\end{equation}
Since by assumption $\rva_i^T \rvd_i = 1$, the left term becomes $\rvx^*[i]$. In addition, since, by assumption, there are no more than $s$ nonzeros in $\rvx^*$ and $\abs{\rva_i^T \rvd_j}$ is bounded by $\tmu$, we get the following bound
\begin{equation}
    \abs{\rvx^*[i]} \leq \abs{\rva_i^T \rvy} +s \tmu \norm{\rvx^*}_{\infty} + \abs{\rva_i^T \rve}.
\end{equation}
By taking a maximum over $i$ we obtain
\begin{equation}\label{eq:27}
    (1 - s \tmu)\norminf{\rvx^*} \leq \norminf{\rmA^T \rvy} + \norminf{\rmA^T \rve}.
\end{equation}
Since we have assumed that $\norminf{\rmA^T \rvy} \leq \thmax$, and 
\begin{equation}
    \norminf{\rmA^T \rve} = \thmin (1-2\gamma \tmu s) < \thmax (1-2\gamma \tmu s),
\end{equation}
we get  
\begin{equation}
    (1 - s \tmu) \norminf{\rvx^*} \leq 2\thmax (1-\gamma \tmu s).
\end{equation}
Finally, since $s \tmu\leq \frac{1}{2}$, and $\gamma > 1$, we get 
\begin{equation}\label{eq:30}
    \norminf{\rvx_0 - \rvx^*} = \norminf{\rvx^*} \leq 2\thmax,
\end{equation}
as in \Eqref{eq:induction_hyp_recover}, and therefore the recovery error hypothesis holds for the base case.

\paragraph{Inductive step: }
Assuming the induction hypothesis holds for iteration $k$, we show that it also holds for the next iteration $k+1$. We define $\mathcal{I} \triangleq \Supp(\rvx^*) - \{i\}$ and denote by $\rmD_{\mathcal{I}}$ the subset $\mathcal{I}$ of columns in $\rmD$.

We start by proving the support hypothesis (\Eqref{eq:induction_hyp_supp}). By Definition \ref{def:adalista_one}, the following holds for any index $i$: 
\begin{equation}
    \rvx_{k+1}[i] = \gS_{\theta_{k+1}} ( \rvx_k[i] + \rva_i^T (\rvy - \rmD\rvx_k) ).
\end{equation}
Placing $\rvy = \rmD\rvx^* + \rve$, we get
\begin{equation} \label{eq:32}
    \rvx_{k+1}[i] = \gS_{\theta_{k+1}} ( \rvx_k[i] + \rva_i^T \rmD (\rvx^* - \rvx_k) + \rva_i^T \rve ).
\end{equation}
Since $\rva_i^T\rvd_i = 1$, the following holds:
\begin{equation*}
    \rvx_k[i] + \rva_i^T \rmD (\rvx^* - \rvx_k) = 
        \rvx^*[i] + \rva_i^T \rmD_{\mathcal{I}} (\rvx^*[\mathcal{I}] - \rvx_k[\mathcal{I}]).
\end{equation*}
Therefore, \Eqref{eq:32} becomes
\begin{equation}\label{eq:34}
    \rvx_{k+1}[i] = \gS_{\theta_{k+1}} ( \underbrace{\rvx^*[i] + \rva_i^T \rmD_{\mathcal{I}} (\rvx^*[\mathcal{I}] - \rvx_k[\mathcal{I}]) + \rva_i^T \rve}_{\triangleq r} ).
\end{equation}

We aim to show that for any $i \notin \Supp(\rvx^*)$, $\rvx_{k+1}[i]=0$, as the support hypothesis suggests. Since $\rvx^*[i]=0$, we can bound the input argument of the soft threshold by
\begin{equation}
    \abs{r} \leq \abs{\rva_i^T \rmD_{\mathcal{I}} (\rvx^*[\mathcal{I}] - \rvx_k[\mathcal{I}])} + \norminf{\rmA^T \rve}.
\end{equation}
Using the induction assumption on the support, $\Supp(\rvx_k) \in \Supp(\rvx^*)$, we can upper bound the first term in the right-hand-side, 
\begin{equation}
    \abs{\rva_i^T \rmD_{\mathcal{I}} (\rvx^*[\mathcal{I}] - \rvx_k[\mathcal{I}])} \leq s \tmu \norminf{\rvx^* - \rvx_k}.
\end{equation}
Using the induction assumption on the recovery error (\Eqref{eq:induction_hyp_recover}), we have $\norminf{\rvx^* - \rvx_k} \leq 2\theta_k$. Therefore, we get  
\begin{equation}
    \abs{r} \leq 2s \tmu \theta_k + \norminf{\rmA^T \rve}.
\end{equation}
However, by our assumptions, 
\begin{equation}\label{eq:38}
    \norminf{\rmA^T \rve} = \thmin (1-2\gamma \tmu s) < \theta_{k+1} (1-2\gamma \tmu s).
\end{equation}
Therefore,
\begin{equation}
    \abs{r} \leq 2s \tmu \theta_k + \theta_{k+1} (1-2\gamma \tmu s),
\end{equation}
and by placing $\theta_k = \gamma \theta_{k+1}$ we get  
\begin{equation}
    \abs{r} \leq \theta_{k+1}.
\end{equation}
Since $r$ is the input to the soft threshold operator $\gS_{\theta_{k+1}}$, and it is no bigger than the threshold, we get that $\rvx_{k+1}[i]=0$, and the support hypothesis holds.

We proceed by proving that the recovery error hypothesis also holds (\Eqref{eq:induction_hyp_recover}). We use the fact that for any scalar triplet, $(\rvx_1, \rvx_2, \theta)$, the soft threshold satisfies
\begin{equation}\label{eq:41}
    \abs{\gS_{\theta}(\rvx_1 + \rvx_2) - \rvx_1} \leq \theta + \abs{\rvx_2}.
\end{equation}
Therefore, following \Eqref{eq:34} we get 
\begin{multline*}
    \abs{\rvx_{k+1}[i] - \rvx^*[i]} \leq \\
    \theta_{k+1} + \abs{\rva_i^T \rmD_{\mathcal{I}} (\rvx^*[\mathcal{I}] - \rvx_k[\mathcal{I}])} + \norminf{\rmA^T \rve}.
\end{multline*}
As before, since $\Supp(\rvx_k) \in \Supp(\rvx^*)$, we have 
\begin{equation}
    \abs{\rva_i^T \rmD_{\mathcal{I}} (\rvx^*[\mathcal{I}] - \rvx_k[\mathcal{I}])} \leq 2 s \tmu \theta_k.
\end{equation}
Therefore, by using \Eqref{eq:38} we get 
\begin{equation}
    \abs{\rvx_{k+1}[i] - \rvx^*[i]} \leq \theta_{k+1} + 2 s \tmu \theta_k + \theta_{k+1} (1-2\gamma \tmu s),
\end{equation}
and by placing $\theta_k = \gamma \theta_{k+1}$ we obtain 
\begin{equation}
    \abs{\rvx_{k+1}[i] - \rvx^*[i]} \leq 2\theta_{k+1}.
\end{equation}
By taking a maximum over $i$, we establish the recovery error hypothesis (\Eqref{eq:induction_hyp_recover}), concluding the proof.
\end{proof}

\section{Proof of Theorem \ref{th:dictionaries_set}}
\label{ap:proof_dictionaries_set}

We define an effective matrix $\rmG = \rmD^T \rmW^T \rmD$. In this part, we aim to prove that linear convergence is guaranteed for any dictionary $\rmD$, satisfying two conditions: (i) the diagonal elements of $\rmG$ are close to $1$, and (ii) the off-diagonal elements of $\rmG$ are bounded.

\begin{proof}

This proof is based on Appendix \ref{ap:proof_1}, with the following two modifications: The mutual coherence $\tmu$ is replaced with $\bmu$, and the diagonal element $\rva_i^T \rvd_i$ is not assumed to be equal to $1$, but rather bounded from below by $1-\epsilon_d$.

The base case of the induction (\Eqref{eq:27}) now becomes:
\begin{equation}
    \norminf{\rvx^*} (1 - \epsilon_d - \bmu s) \leq \norminf{\rmA^T\rvy} + \norminf{\rmA^T \rve}.
\end{equation}
Since we assume $\norminf{\rmA^T\rvy} \leq \thmax$, and
\begin{equation}
    \norminf{\rmA^T \rve} < \thmax (1 - 2\gamma \epsilon_d - 2 \gamma \bmu s),
\end{equation}
we get 
\begin{equation}
    \norminf{\rvx^*} (1 - \epsilon_d - \bmu s) \leq 2 \thmax (1 - \gamma \epsilon_d - \gamma \bmu s).
\end{equation}
As $\gamma > 1$, $\norminf{\rvx^*}< 2\thmax$, therefore the induction hypothesis holds for the base case.

Moving to the inductive step, the proof of the support hypothesis remains almost the same, apart from replacing $\tmu$ with $\bmu$.
This is due to the fact that if $i \notin \Supp(\rvx^*)$, then $\rvx^*[i]=\rvx_k[i]=0$, and therefore the diagonal elements $\rva^T_i\rvd_i$ multiply zero elements.

As to the recovery error hypothesis, we need to upper bound $\norminf{\rvx^* - \rvx_{k+1}}$ for $i \in \Supp(\rvx^*)$. 
Since $\rva_i^T \rvd_i\ne 1$ we need to modify \Eqref{eq:32}: 
\begin{multline}
    \rvx_{k+1} [i] = \gS_{\theta_{k+1}} \big( \rvx^* [i] + 
    \rva_i^T \rmD_{\mathcal{I}} 
    (\rvx^* - \rvx_k)_{\mathcal{I}}  + \rva_i^T \rve \\
    + (1-\rva_i^T \rvd_i) (\rvx_k[i] - \rvx^* [i]) \big).
\end{multline}
Using \Eqref{eq:41} we get that $\abs{\rvx_{k+1} [i] - \rvx^* [i]}$ is upper bounded by
\begin{multline}
    \theta_{k+1}  + \bmu s  \| \rvx^* - \rvx_k\|_\infty 
    + \|\trmA^T \rve \|_\infty \\
    + \abs{(1-\trva_i^T \rvd_i)} \abs{ \rvx_k[i] - \rvx^* [i]},
\end{multline}
which in turn is upper bounded by
\begin{equation}
    \theta_{k+1}  +  \bmu s  2 \theta_k  + \theta_{k+1} (1 - 2 \gamma \epsilon_d - 2 \gamma \bmu s) + 2\epsilon_d \theta_k.
\end{equation}
Placing $\theta_k = \gamma \theta_{k+1}$ results in
\begin{equation}
    \abs{\rvx_{k+1} [i] - \rvx^* [i]} \leq 2 \theta_{k+1}.
\end{equation}
Taking a maximum over $i$ establishes the recovery error assumption, proving the induction hypothesis. 
\end{proof}

\section{Proof for Random Permutations}
\label{ap:proof_permutation}

We show that if the weight matrix $\rmW$ leads to linear convergence for signals generated by $\rmD$, then linear convergence is also guaranteed for signals originating from $\trmD = \rmD \rmP$, where $\rmP$ is a permutation matrix. The proof is straightforward, as the permutation matrix does not flip diagonal and off-diagonal elements in the effective matrix $\rmP^T \rmG \rmP$. Thus, the mutual coherence does not change and the conditions of Theorem \ref{th:dictionaries_set} hold, establishing linear convergence.

\section{Proof for Noisy Dictionaries}
\label{ap:proof_noisy_dict}

We now consider signals from noisy models, $\rvy = \trmD \rvx^* + \rve$, where $\trmD = \rmD + \rmE$, and the model deviations are of Gaussian distribution, $\emE_{i j}\sim \mathcal{N}(0,\sigma^2)$. Given pairs of $(\rvy, \trmD$), we show that Ada-LISTA recovers the original representations $\rvx^*$, with respect to their model $\trmD$ in linear rate. 

\begin{theorem}[Ada-LISTA Convergence -- Noisy Model]
\label{th:noisy_model}
    Consider a noisy input $\rvy = \trmD \rvx^* + \rve$, where $\trmD = \rmD + \rmE$, $\emE_{i j}\sim \mathcal{N}(0,\sigma^2/n)$. If for some constants $\taud,\tauod>0$, $\rvx^*$ is sufficiently sparse,
    \begin{equation}
        s = \norm{\rvx^*}_0 < \frac{1}{2 \bmu}, \quad \bmu \triangleq \tmu + \tauod,
    \end{equation}
    and the thresholds satisfy
    \begin{equation}
        \theta_k =  \thmax\, \gamma^{-k} > \thmin = \frac{\|\trmA^T \rve \|_\infty } {1 - 2\gamma\epsilon_d - 2 \gamma \bmu s},
    \end{equation}
    with $1 < \gamma < (2 \bmu s)^{-1}$, $\epsilon_d \triangleq \wdd + \taud <\frac{1}{2}$, $\wdd \triangleq \frac{\sigma^2}{n} \sum_{k=1}^n \emW_{k k}$, $\trmA \triangleq \rmW \trmD$, and $\thmax \geq \|\trmA^T \rvy\|_\infty$, then, with probability of at least $(1-p_1 p_2)$, the support of the $k$th iteration of Ada-LISTA is included in the support of $\rvx^*$ and its values satisfy
    \begin{equation}
        \|\rvx_k - \rvx^* \|_\infty \leq 2 \,\thmax \,\gamma^{-k}.
    \end{equation}
\end{theorem}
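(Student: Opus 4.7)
The plan is to reduce Theorem~\ref{th:noisy_model} to the deterministic guarantee of Theorem~\ref{th:dictionaries_set} by showing that, with high probability over the draw of $\rmE$, the effective matrix $\trmG \triangleq \trmD^T \rmW^T \trmD$ of the noisy dictionary satisfies the two structural hypotheses used there: diagonal entries close to $1$ (within $\epsilon_d = \wdd + \taud$) and off-diagonal entries bounded by $\bmu = \tmu + \tauod$. Once this is established on a high-probability event, the threshold condition in the present statement is literally the threshold condition of Theorem~\ref{th:dictionaries_set} with $\rmA$ replaced by $\trmA = \rmW \trmD$, so linear convergence follows immediately on that event.

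First I would expand
\[
    \trmG = \rmD^T \rmW^T \rmD + \rmD^T \rmW^T \rmE + \rmE^T \rmW^T \rmD + \rmE^T \rmW^T \rmE,
\]
and recall that by the clean mutual-coherence assumption $\mu(\rmW\rmD,\rmD) \leq \tmu$, the matrix $\rmD^T \rmW^T \rmD$ has unit diagonal and off-diagonal entries bounded by $\tmu$. For a diagonal entry $(i,i)$, the perturbation is $2\rvd_i^T \rmW^T \rve_i + \rve_i^T \rmW^T \rve_i$, where $\rve_i$ is the $i$th column of $\rmE$. The quadratic term is a Gaussian chaos with mean $(\sigma^2/n)\,\Tr(\rmW) = \wdd$; Hanson--Wright concentrates it within $\taud/2$ of its mean with probability controlled by $\|\rmW\|_F$ and $\|\rmW\|_{\mathrm{op}}$. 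The linear term $\rvd_i^T \rmW^T \rve_i$ is Gaussian with variance $(\sigma^2/n)\|\rmW \rvd_i\|_2^2$, so a standard sub-Gaussian tail bound handles it. A union bound over the $m$ columns then gives $\max_i |\trmG_{ii} - 1| \leq \wdd + \taud = \epsilon_d$ with probability at least $1 - p_1$, for $p_1$ a sum of column-wise failure probabilities that decays exponentially in $\taud$.

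For an off-diagonal entry $(i,j)$ with $i \neq j$, the perturbation is $\rvd_i^T \rmW^T \rve_j + \rve_i^T \rmW^T \rvd_j + \rve_i^T \rmW^T \rve_j$, which has mean zero since $\rve_i$ and $\rve_j$ are independent. The two linear cross terms are Gaussian and again handled by sub-Gaussian tails; the bilinear form $\rve_i^T \rmW^T \rve_j$ is a decoupled Gaussian chaos, treatable by Hanson--Wright for the stacked vector $[\rve_i;\rve_j]$. A union bound over the $\binom{m}{2}$ pairs yields $\max_{i\neq j}|\trmG_{ij}| \leq \tmu + \tauod = \bmu$ with probability at least $1 - p_2$. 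Intersecting the two events and invoking Theorem~\ref{th:dictionaries_set} with the matrix $\trmA = \rmW\trmD$ in place of $\rmA$, the sparsity bound $s < 1/(2\bmu)$, the diagonal-deviation bound $\epsilon_d$, and the stated threshold schedule, we obtain both the support containment and the geometric error bound $\|\rvx_k - \rvx^*\|_\infty \leq 2\thmax \gamma^{-k}$ on the intersected event, whose complement has probability bounded by the sum (or, equivalently, $p_1 p_2$ after an appropriate parametrization of the failure probabilities).

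The main obstacle I anticipate is obtaining clean quantitative constants for the two concentration steps when $\rmW$ is arbitrary and possibly non-symmetric: the Hanson--Wright constants involve $\|\rmW\|_{\mathrm{op}}$ and $\|\rmW\|_F$, so the deviation parameters $\taud,\tauod$ and the failure probabilities $p_1,p_2$ must be apportioned carefully so that the resulting bounds still match the statement's form. A secondary technical subtlety is verifying that the threshold schedule remains well-posed on the high-probability event, i.e.\ that $\|\trmA^T \rvy\|_\infty \leq \thmax$ and the denominator $1 - 2\gamma\epsilon_d - 2\gamma\bmu s$ is positive; both reduce to the constraint $\gamma < (2\bmu s)^{-1}$ together with $\epsilon_d < 1/2$, which are built into the hypotheses of the theorem.
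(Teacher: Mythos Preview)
Your two-stage reduction---bound the diagonal and off-diagonal perturbations of $\trmG=\trmD^T\rmW^T\trmD$ via concentration, then invoke Theorem~\ref{th:dictionaries_set} with $\trmA=\rmW\trmD$---is exactly the paper's strategy. The substantive difference is the concentration tool: the paper computes the mean and variance of each entry of $\brmG=\trmG-\rmG$ explicitly (expanding all six cross terms $\emT^a,\emT^b,\emT^c$ and their products) and then applies Cantelli's inequality, yielding polynomial tail probabilities, whereas you propose Hanson--Wright plus sub-Gaussian tails. Your route buys exponential tails and avoids the lengthy moment bookkeeping; the paper's route is more elementary (only second moments are needed) and produces fully explicit expressions for $p_1,p_2$ in terms of the variances $\Vd,\Vod$. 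Both arrive at the same structural conclusion $\max_i|\trmG_{ii}-1|\leq \wdd+\taud$ and $\max_{i\ne j}|\trmG_{ij}|\leq\tmu+\tauod$ on a high-probability event.

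Two small corrections. First, the diagonal perturbation is $\rvd_i^T\rmW^T\rve_i+\rve_i^T\rmW^T\rvd_i+\rve_i^T\rmW^T\rve_i$, not $2\rvd_i^T\rmW^T\rve_i+\rve_i^T\rmW^T\rve_i$; the two linear terms coincide only if $\rmW$ is symmetric, which is not assumed. This does not affect your argument, since both are mean-zero Gaussian linear forms handled identically. Second, your final remark that the failure probability is ``the sum (or, equivalently, $p_1p_2$ after an appropriate parametrization)'' is vague; the paper simply states $1-p_1p_2$ without justification of independence, so matching that exact form is not something your argument needs to reproduce---a union bound $p_1+p_2$ is what either approach actually delivers.
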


\begin{proof}

The proof for this theorem consists of two stages. First, we study the effect of model perturbations on the effective matrix $\trmG = \trmD^T\rmW^T\trmD$, deriving probabilistic bounds for the changes in the diagonal and off-diagonal elements. Then, we place these bounds in Theorem \ref{th:dictionaries_set} to guarantee linear rate.

We start by bounding the changes in the effective matrix $\trmG = \trmD^T\rmW^T\trmD$. These deviations modify the off-diagonal elements, which are no longer bounded by $\tmu$, and the diagonal elements that are not equal to $1$ anymore. 
Define $\brmG$ as:
\begin{equation}
    \brmG = \trmG - \rmG = \rmD^T \rmW^T \rmE + \rmE^T \rmW^T \rmD + \rmE^T \rmW^T \rmE.
\end{equation}
This implies $\bemG_{i j}$ is equal to:
\begin{multline}
    \bemG_{i j} = \underbrace{\sum_{k=1}^n \sum_{l=1}^n \emD_{k i} \emW_{l k} \emE_{l j}}_{\triangleq \emT^a_{i j}} + \underbrace{\sum_{k=1}^n \sum_{l=1}^n \emE_{k i} \emW_{l k} \emD_{l j}}_{\triangleq \emT^b_{i j}} \\ 
    + \underbrace{\sum_{k=1}^n \sum_{l=1}^n \emE_{k i} \emW_{l k} \emE_{l j}}_{\triangleq \emT^c_{i j}}.
\end{multline}

Since $\E[\emE_{i j}^2] = \frac{\sigma^2}{n}$ and the elements in $\rmE$ are independent, the expected value of $\bemG_{i j}$ is 
\begin{equation}
    \E[\bemG_{i j}] = 
    \begin{cases}
    \frac{\sigma^2}{n} \sum_{k=1}^n \emW_{k k}, & \mbox{if } i = j\\
    0, & \mbox{if } i \ne j.
    \end{cases}
\end{equation}

To bound the changes in $\bemG_{i j}$ we aim to use Cantelli's inequality, but first, we need to find the variance of $\bemG_{i j}$:
\begin{multline*}
    \Var [\bemG_{i j} ] = \E[\emT^a_{i j}]^2 + \E[\emT^b_{i j}]^2 + \E[\emT^c_{i j} - \E[\bemG_{i j}]]^2 + 2\E[\emT^a_{i j} \emT^b_{i j}] \\ + 2\E[\emT^a_{i j} (\emT^c_{i j} - \E[\bemG_{i j}])] + 2\E[\emT^b_{i j} (\emT^c_{i j} - \E[\bemG_{i j}])].
\end{multline*}
In what follows we calculate each term in the right-hand-side, starting with $\E[\emT^a_{i j}]^2$:
\begin{equation}
\begin{split}
    \E[\emT^a_{i j}]^2 & = \E \Big[ \sum_{k=1}^n \sum_{l=1}^n \emD_{k i} \emW_{l k} \emE_{l j} \sum_{k'=1}^n \sum_{l'=1}^n \emD_{k' i} \emW_{l' k'} \emE_{l' j} \Big] \\ 
    & = \frac{\sigma^2}{n} \underbrace{\E \Big[ \sum_{k=1}^n \sum_{l=1}^n \sum_{k'=1}^n \emD_{k i} \emW_{l k} \emD_{k' i} \emW_{l k'} \Big]}_{\triangleq \emC^a_{i j}}.
    \end{split}
\end{equation}
Moving on to $\E[\emT^b_{i j}]^2$, we get
\begin{equation}
\begin{split}
    \E[\emT^b_{i j}]^2 & = \E \Big[ \sum_{k=1}^n \sum_{l=1}^n \emE_{k i} \emW_{l k} \emD_{l j} \sum_{k'=1}^n \sum_{l'=1}^n \emE_{k' i} \emW_{l' k'} \emD_{l' j} \Big] \\ 
    & = \frac{\sigma^2}{n} \underbrace{\E \Big[ \sum_{k=1}^n \sum_{l=1}^n \sum_{l'=1}^n \emW_{l k} \emD_{l j} \emW_{l' k} \emD_{l' j} \Big]}_{\triangleq \emC^b_{i j}}.
    \end{split}
\end{equation}
As for $\E[\emT^c_{i j} - \E[\bemG_{i j}]]^2$, if $i \ne j$ then
\begin{equation}
\begin{split}
    \E \Big[ \sum_{k=1}^n \sum_{l=1}^n \emE_{k i} \emW_{l k} \emE_{l j} \sum_{k'=1}^n \sum_{l'=1}^n \emE_{k' i} \emW_{l' k'} \emE_{l' j}
    \Big] \\
    = \frac{\sigma^4}{n^2} \underbrace{\E \Big[ \sum_{k=1}^n \sum_{l=1}^n \emW_{l k}^2 \Big]}_{\triangleq \emC^c_{i j}}.
\end{split}
\end{equation}
Whereas, if $i = j$, then $\E[\emT^c_{i j} - \E[\bemG_{i j}]]^2$ becomes
\begin{multline}
    \E \Big[ \sum_{k=1}^n \sum_{l=1 \ne k}^n \emE_{k i}^2 \emW_{l k}^2 \emE_{l j}^2 \Big] + \E \Big[ \sum_{k=1}^n \sum_{l=1 \ne k}^n \emE_{k i}^2 \emW_{l k} \emW_{k l} \emE_{l j}^2 \Big] \\ + \E \Big[ \sum_{k=1}^n \sum_{k'=1 \ne k}^n \emE_{k i} \emW_{k k} \emE_{k j} \emE_{k' i} \emW_{k' k'} \emE_{k' j} \Big]\\
    + \E \Big[ \sum_{k=1}^n \emE_{k i}^2 \emW_{k k}^2 \emE_{k j}^2
    \Big] - \frac{\sigma^4}{n^2} \Big(\sum_{k=1}^n \emW_{k k} \Big)^2
\end{multline}
Using the fourth moment of Gaussian distribution, we obtain $\E[\emT^c_{i j} - \E[\bemG_{i j}]]^2$ is equal to
\begin{equation}
    \frac{\sigma^4}{n^2} \Bigg( \underbrace{\sum_{k=1}^n \sum_{l=1 \ne k}^n \emW_{l k}^2 + \sum_{k=1}^n \sum_{l=1 \ne k}^n \emW_{l k} \emW_{k l}
    + 2\sum_{k=1}^n \emW_{k k}^2}_{\triangleq \emC^d_{i j}} \Bigg).
\end{equation}
Continuing with $2\E[\emT^a_{i j} \emT^b_{i j}]$, we get
\begin{equation}
    2\E[\emT^a_{i j} \emT^b_{i j}] = 2\E \Big[ \sum_{k=1}^n \sum_{l=1}^n \sum_{l'=1}^n  \emD_{k i} \emW_{l k} \emE_{l j} \emE_{l i} \emW_{l' l} \emD_{l' j} \Big].
\end{equation}
Therefore, if $i = j$ then $\E[\emT^a_{i j} \emT^b_{i j}] = 0$, and if $i \ne j$ then
\begin{equation}
    2\E[\emT^a_{i j} \emT^b_{i j}] = 
    \frac{\sigma^2}{n} \underbrace{2\sum_{k=1}^n \sum_{l=1}^n \sum_{l'=1}^n  \emD_{k i} \emW_{l k} \emW_{l' l} \emD_{l' j}}_{\triangleq \emC^e_{i j}}.
\end{equation}
As for $2\E[\emT^a_{i j} (\emT^c_{i j} - \E[\bemG_{i j}])]$, and $2\E[\emT^b_{i j} (\emT^c_{i j} - \E[\bemG_{i j}])]$, both are zero since the third moment of Gaussian variable is zero.

To conclude, we define the maximal variance of the off-diagonal elements as,
\begin{equation}
    \Vod \triangleq \max_{i \ne j} \frac{\sigma^2}{n} \left(\emC^a_{i j} + \emC^b_{i j}\right) + \frac{\sigma^4}{n^2} \emC^c_{i j},
\end{equation}
and the maximal variance of the diagonal elements as,
\begin{equation}
    \Vd \triangleq \max_{i = j}  \frac{\sigma^2}{n} \left(\emC^a_{i j} + \emC^b_{i j} + \emC^e_{i j} \right) + \frac{\sigma^4}{n^2} \emC^d_{i j}.
\end{equation}

Identifying the variance of $\bemG_{i j}$ enables to bound the changes in the effective matrix using Cantelli's inequality. Starting with the off-diagonal elements, we obtain
\begin{equation}
    p(\abs{\bemG_{i j}} \geq \tauod ) \leq
    \frac{2\Vod^2}{\Vod^2 + \tauod^2}.
\end{equation}
Taking the maximum over all off-diagonal elements, we get
\begin{equation}
    p( \max_{i,j\ne i} \abs{\bemG_{i j}} \geq \tauod) \leq p_1,
\end{equation}
with
\begin{equation}
    p_1 \triangleq 1 -  \left(\frac{\tauod^2 - \Vod^2}{\Vod^2 + \tauod^2}\right)^{n(n-1)}.
\end{equation}
Moving on to the diagonal elements, we have
\begin{equation}
    p\left(\abs{\bemG_{i i} - \frac{\sigma^2}{n} \sum_{k=1}^n \emW_{k k}} \geq \taud \right) \leq
    \frac{2\Vd^2}{\Vd^2 + \taud^2}.
\end{equation}
Taking the maximum over all diagonal elements, we get
\begin{equation}
    p\left( \max_{i} \abs{\bemG_{i i} - \frac{\sigma^2}{n} \sum_{k=1}^n \emW_{k k}} \geq \taud \right) \leq p_2,
\end{equation}
with 
\begin{equation}
    p_2 \triangleq 1 -  \left(\frac{\taud^2 - \Vd^2}{\Vd^2 + \taud^2}\right)^{n}.
\end{equation}

Therefore, with probability of at least $1-p_1 p_2$, we obtain that the matrix $\trmG = \rmW^T \trmD^T \trmD$ satisfies the following:
\begin{itemize}
    \item The off-diagonal elements are bounded:
        \begin{equation}\label{eq:off_diag_bound}
            \max_{i,j\ne i} \abs{\etrmG_{i j}} \leq \tmu + \tauod.
        \end{equation}
    \item The diagonal elements are close to $1$:
        \begin{equation}\label{eq:diag_bound}
            \max_{i} \abs{\etrmG_{i i} - 1} \geq \wdd + \taud, \quad \wdd \triangleq \frac{\sigma^2}{n} \sum_{k=1}^n \emW_{k k}.
        \end{equation}
\end{itemize}

Finally, we apply Theorem \ref{th:dictionaries_set} with the constants
\begin{equation}
    \bmu = \tmu + \tauod, \quad
    \epsilon_d = \wdd + \taud,
\end{equation}
and establish linear convergence, with probability of at least $(1 - p_1 p_2$).

\end{proof}

\section{Synthetic Experiments on Noisy Signals}
\label{app:noisy_sig}

In this part we examine Ada-LISTA's performance for noisy signals by repeating the synthetic experiments in subsection \ref{subsec:synthetic}, with three levels of input SNR: $10$, $20$, and $30$[dB]. Figures \ref{fig:results_column_noisy}, \ref{fig:results_noisy_dict_noisy_sig}, and \ref{fig:results_random_dict_noisy_sig} respectively, present the results for column permutations, noisy dictionaries and random dictionaries. 
The same observations as in the noiseless case hold for noisy signals just as well. Learned solvers can achieve an acceleration even in the presence of noise in the input, and Ada-LISTA manages to mimic the oracle-LISTA, while  coping with a much harder scenario of varying dictionaries.

\begin{figure*}[ht!]
    \centering
    \begin{subfigure}[t]{0.3\linewidth}
        \centering
        \includegraphics[width=1.05\linewidth]{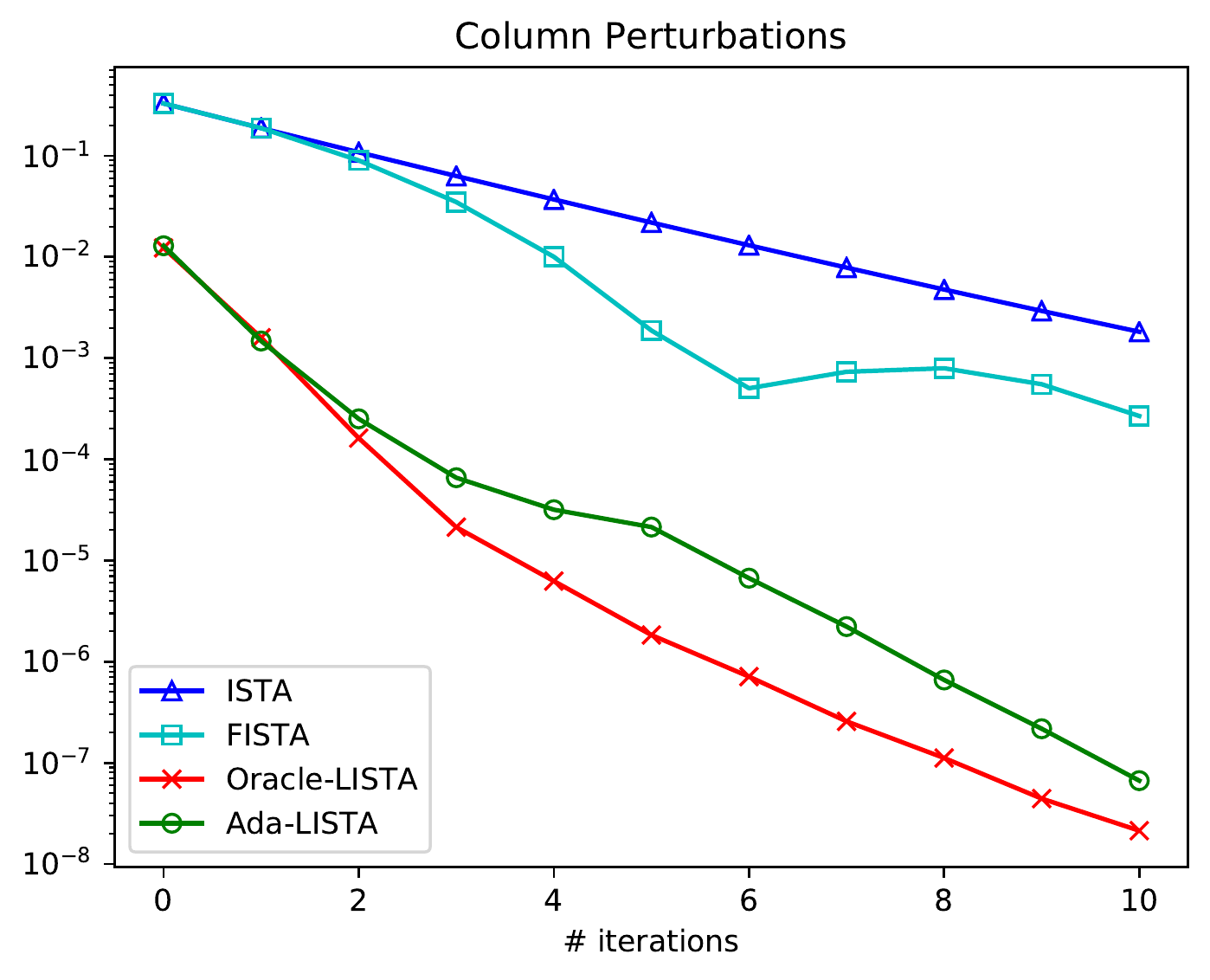}
        \caption{Signal SNR: $10$[dB].}
        \label{fig:results_column_noisy_snr_10}
    \end{subfigure}
    ~
    \begin{subfigure}[t]{0.3\linewidth}
        \centering
        \includegraphics[width=1.05\linewidth]{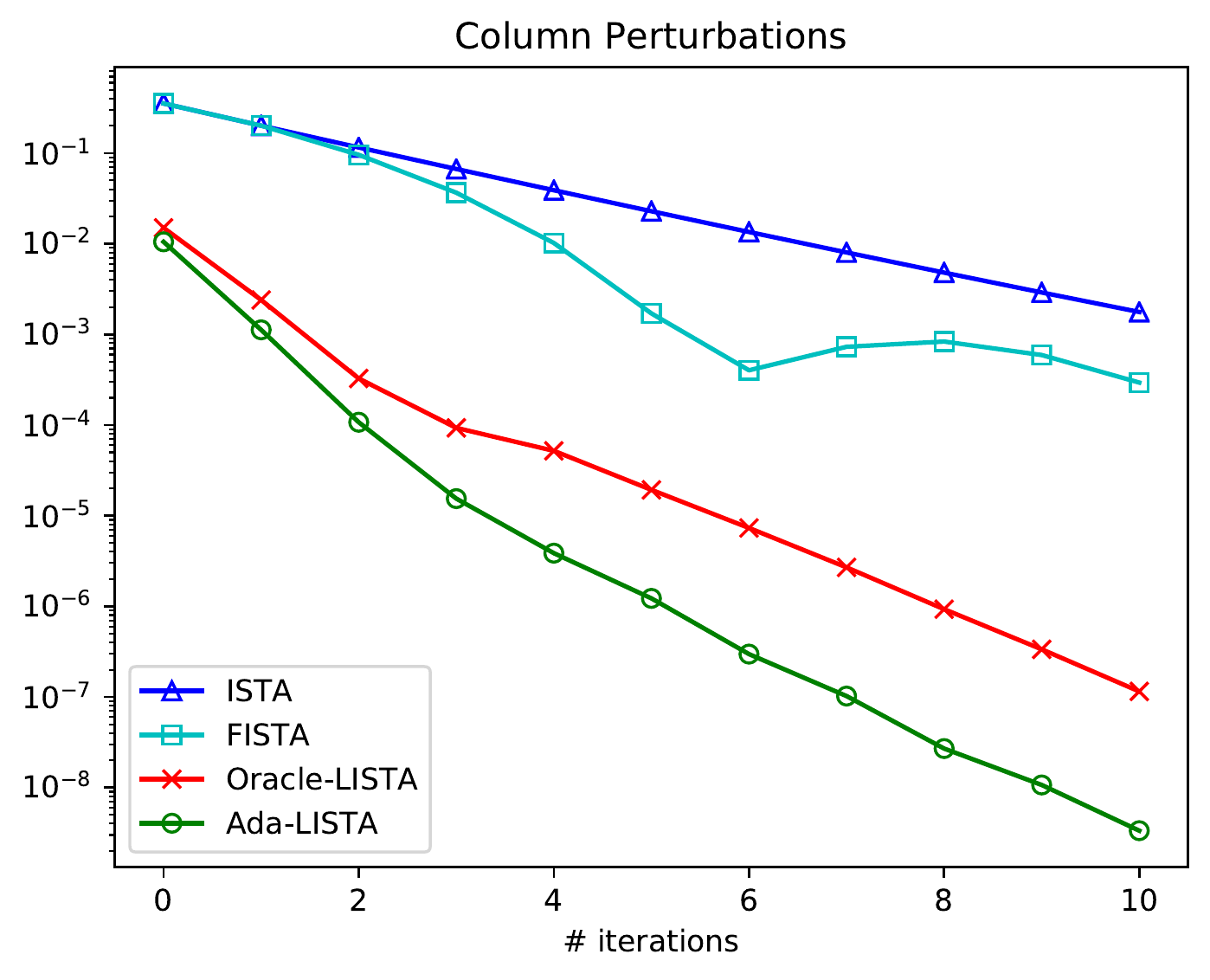}
        \caption{Signal SNR: $20$[dB].}
        \label{fig:results_column_noisy_snr_20}
    \end{subfigure}
    ~
    \begin{subfigure}[t]{0.3\linewidth}
        \centering
        \includegraphics[width=1.05\linewidth]{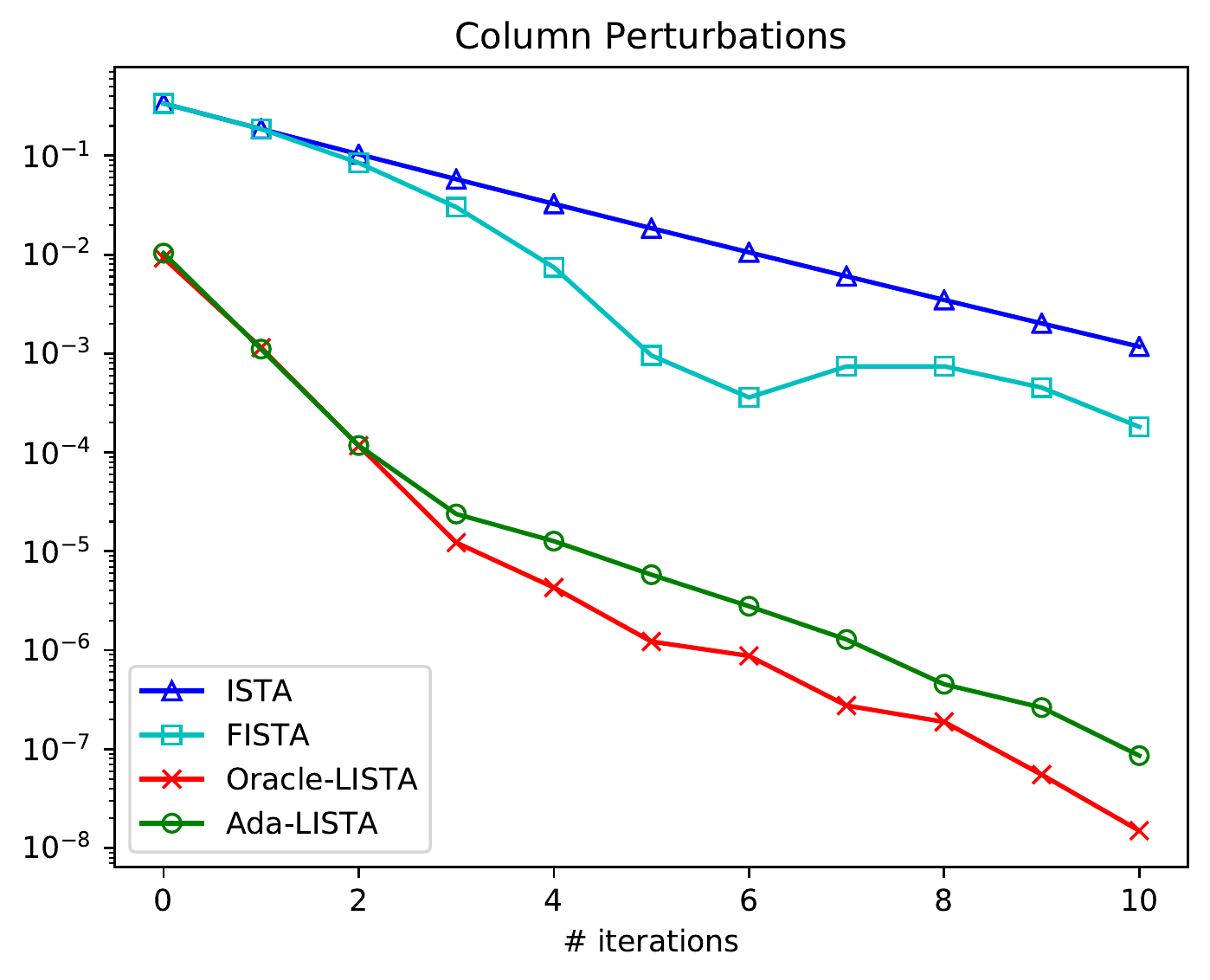}
        \caption{Signal SNR:  $30$[dB].}
        \label{fig:results_column_noisy_snr_30}
    \end{subfigure}
    \caption{MSE performance under column permutations and noisy inputs.}
    \label{fig:results_column_noisy}
\end{figure*}

\begin{figure*}[ht!]
    \centering
    \begin{subfigure}[t]{0.3\linewidth}
        \centering
        \includegraphics[width=1.05\linewidth]{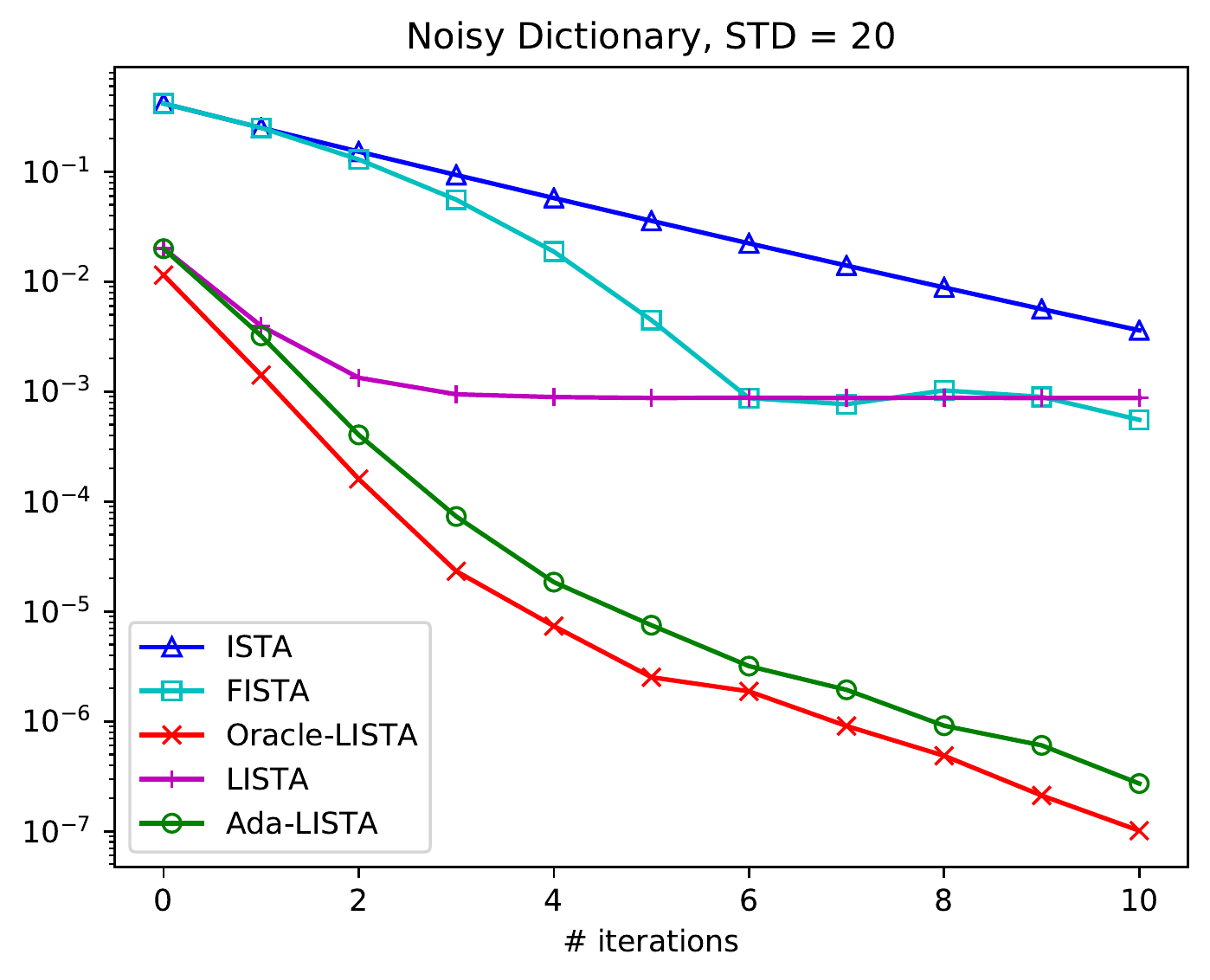}
        \caption{Signal SNR: $10$[dB].}
        \label{fig:results_noisy_dict_noisy_sig_snr_10}
    \end{subfigure}
    ~
    \begin{subfigure}[t]{0.3\linewidth}
        \centering
        \includegraphics[width=1.05\linewidth]{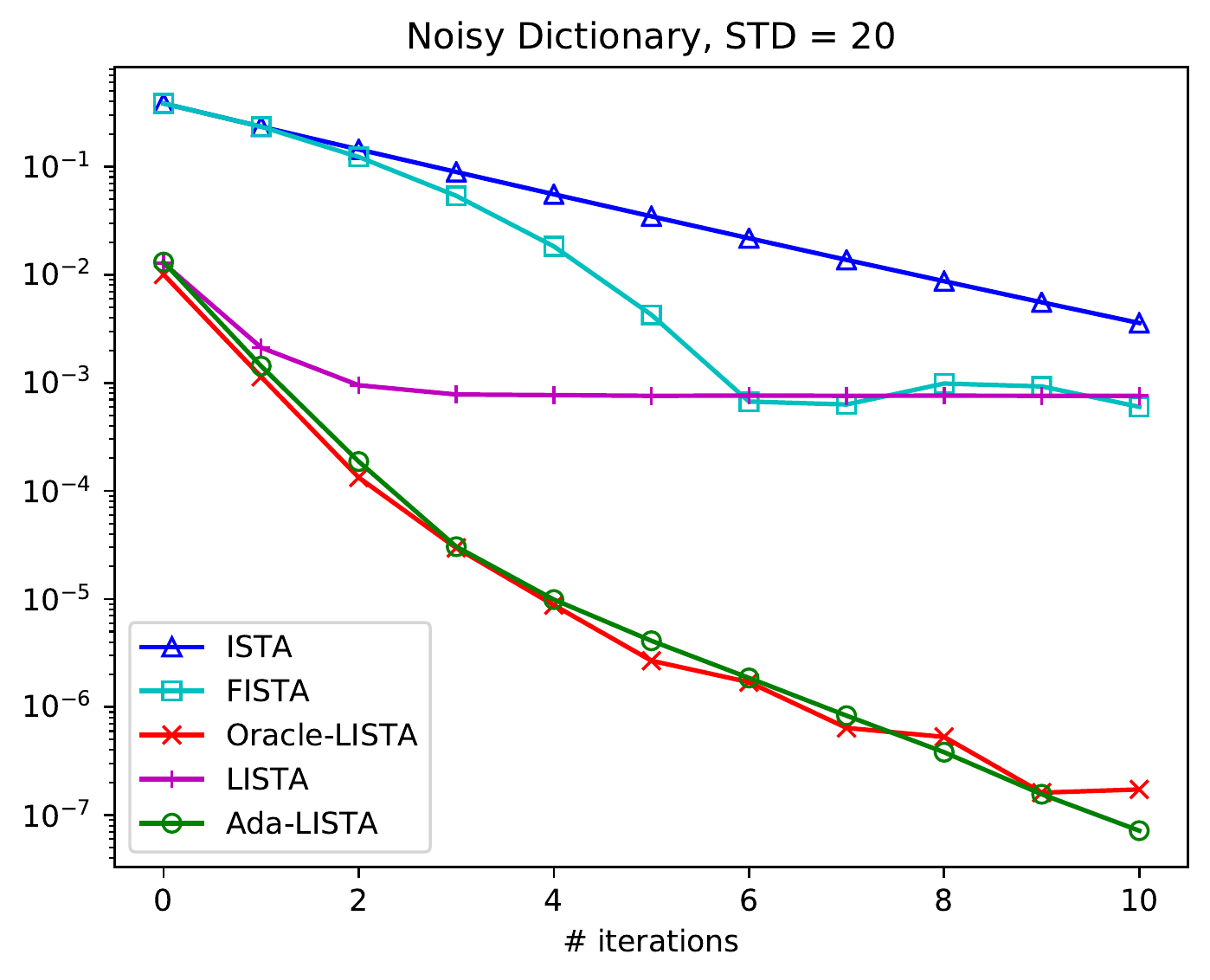}
        \caption{Signal SNR:  $20$[dB].}
        \label{fig:results_noisy_dict_noisy_sig_snr_20}
    \end{subfigure}
    ~
    \begin{subfigure}[t]{0.3\linewidth}
        \centering
        \includegraphics[width=1.05\linewidth]{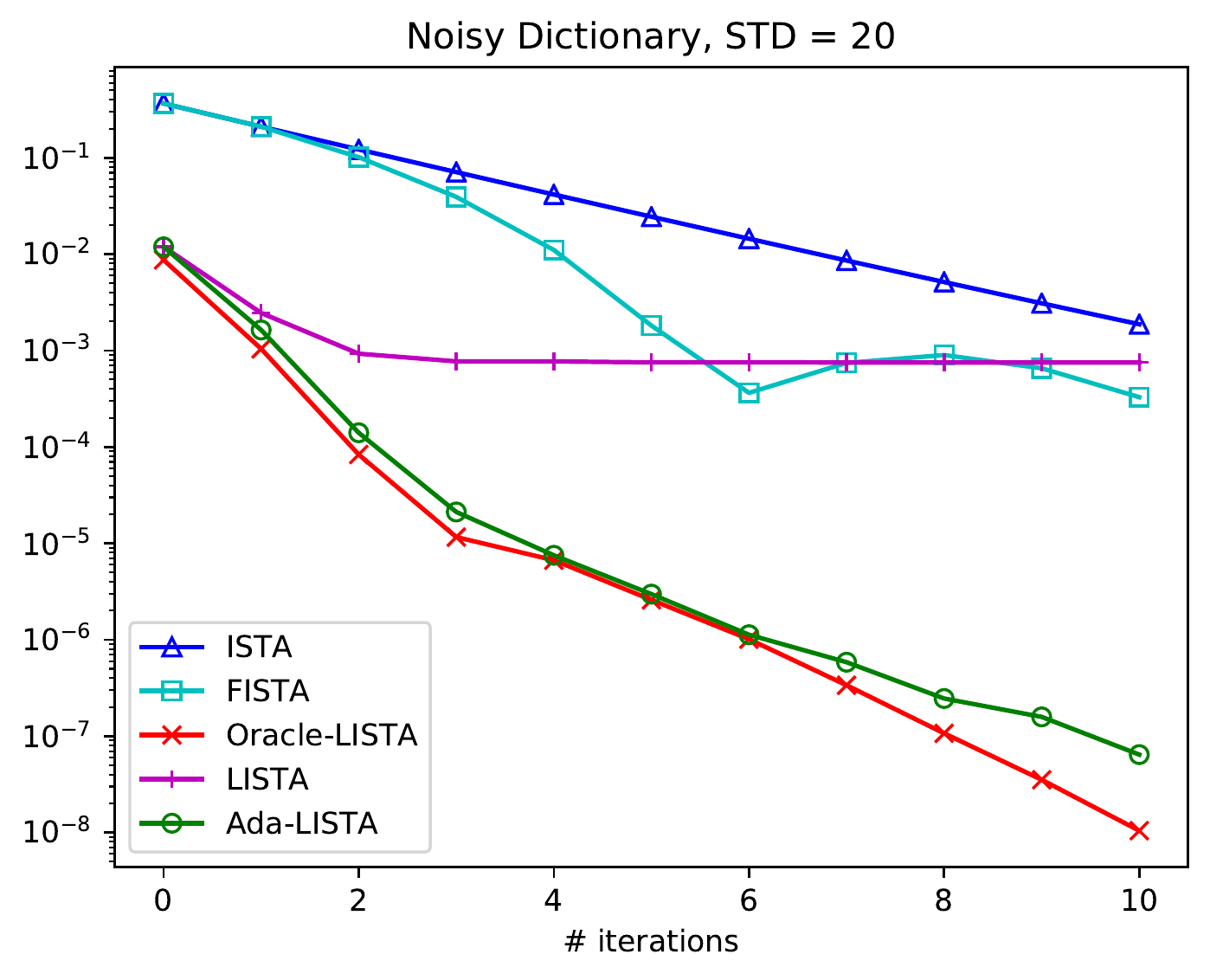}
        \caption{Signal SNR: $30$[dB].}
        \label{fig:results_noisy_dict_noisy_sig_snr_30}
    \end{subfigure}
    \caption{MSE performance for noisy dictionaries and noisy inputs.}
    \label{fig:results_noisy_dict_noisy_sig}
\end{figure*}

\begin{figure*}[ht!]
    \centering
    \begin{subfigure}[t]{0.3\linewidth}
        \centering
        \includegraphics[width=1.05\linewidth]{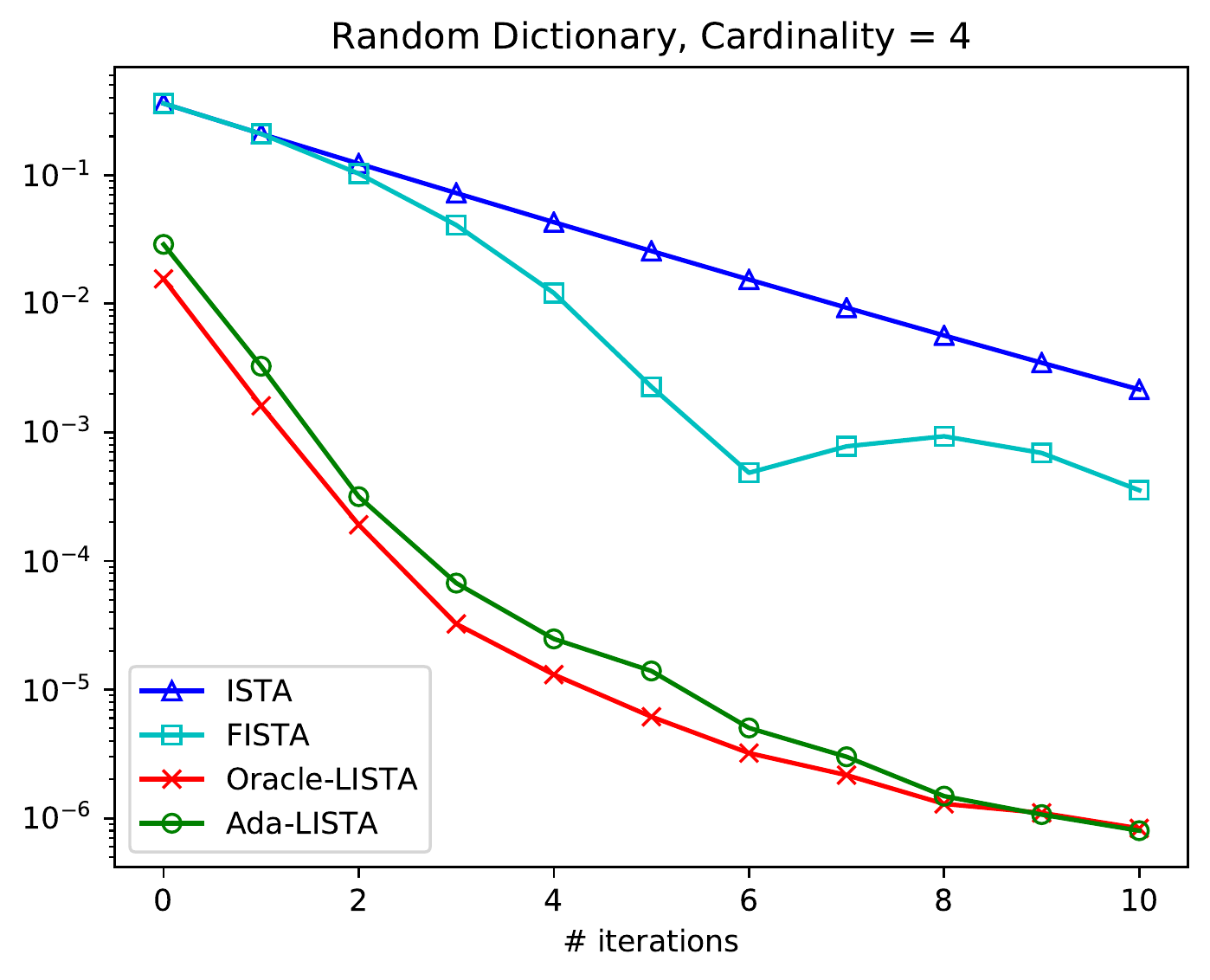}
        \caption{Signal SNR: $10$[dB].}
        \label{fig:results_random_dict_noisy_sig_snr_10}
    \end{subfigure}
    ~
    \begin{subfigure}[t]{0.3\linewidth}
        \centering
        \includegraphics[width=1.05\linewidth]{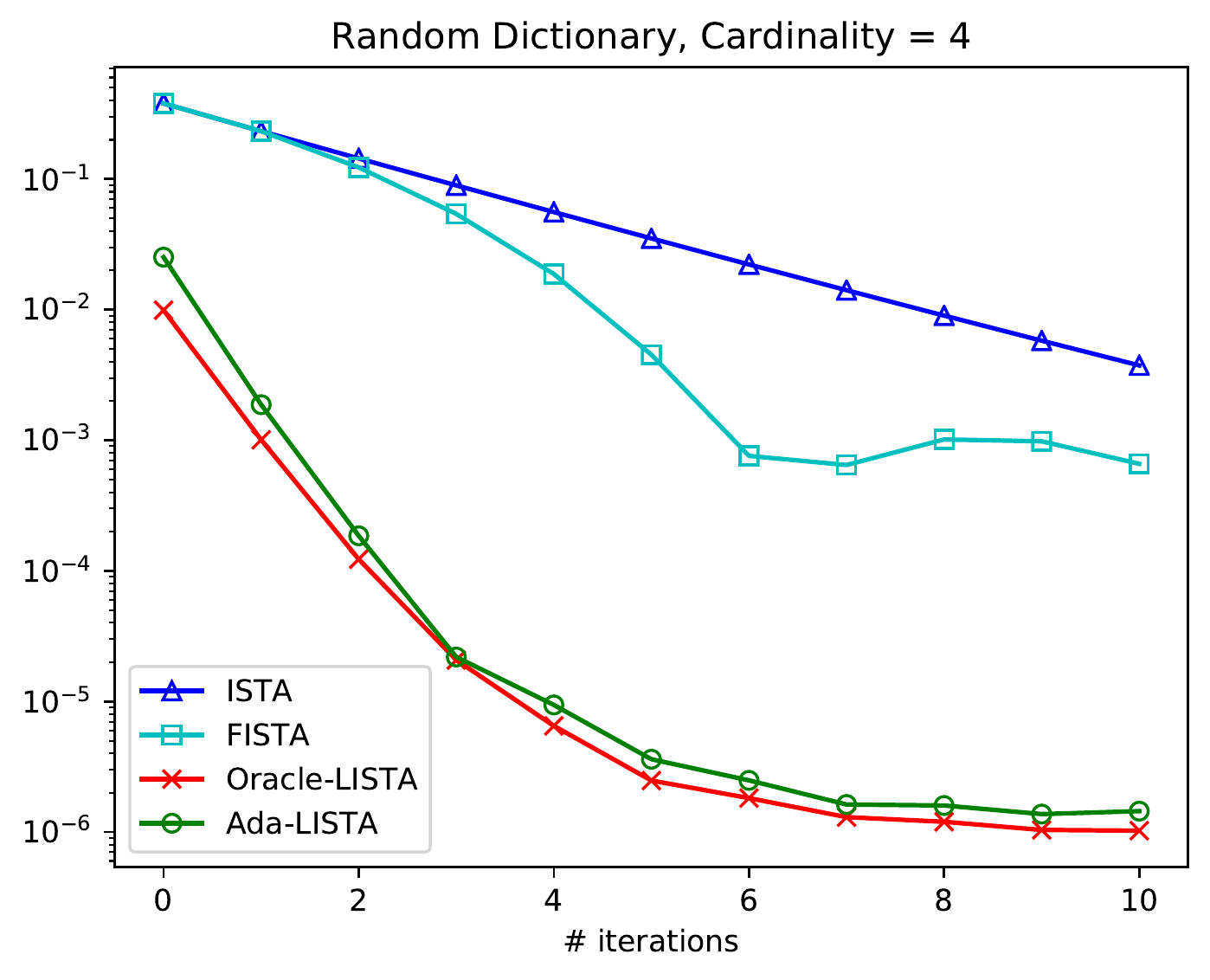}
        \caption{Signal SNR: $20$[dB].}
        \label{fig:results_random_dict_noisy_sig_snr_20}
    \end{subfigure}
    ~
    \begin{subfigure}[t]{0.3\linewidth}
        \centering
        \includegraphics[width=1.05\linewidth]{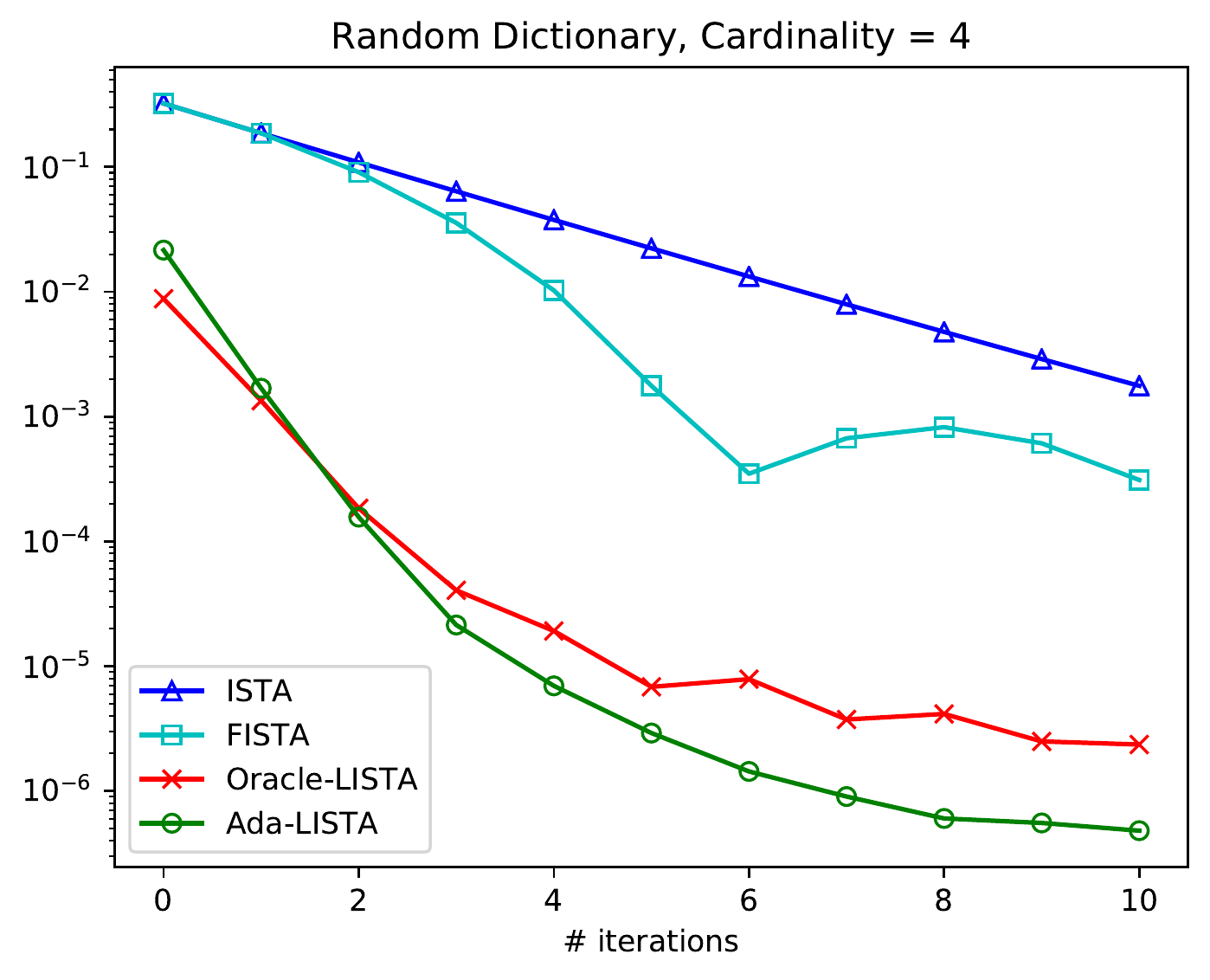}
        \caption{Signal SNR: $30$[dB].}
        \label{fig:results_random_dict_noisy_sig_snr_30}
    \end{subfigure}
    \caption{MSE performance under random dictionaries and noisy inputs.}
    \label{fig:results_random_dict_noisy_sig}
\end{figure*}

\section{Comparison to Robust-ALISTA}
\label{app:alista}

A similar concept of robustness to model noise is suggested in ``robust-ALISTA'' \cite{liu2018alista}, which models the clean dictionary $\rmD$ as having small perturbations of the form $\trmD = \rmD + \rmE$. This robustness is achieved in two stages, the first, computing a weight matrix $\trmW$ for each noisy model $\trmD$ by minimizing:
\begin{equation} \label{eq:alitsa_matrix}
    \trmW = \argmin_{\rmW} \norm{\rmW^T\trmD}_F^2, ~ \st ~ \rvw_i^T \trvd_i =1, ~ \forall i \in [1,m], 
\end{equation}
where $\rvw_i, \trvd_i$ are the $i$th columns of $\rmW$ and $\trmD$ respectively. Secondly, the matrix $\trmW$ is inserted into the ALISTA scheme:
\begin{equation} \label{eq:robust_alista}
    \rvx_{k+1} = \gS_{\theta_{k+1}} \left( \rvx_k - \gamma_{k+1} \trmW^T(\rmD\rvx_k - \rvy) \right),
\end{equation}
where the step sizes and thresholds $\{\gamma_k,\theta_k\}$ are learned parameters.
The advantage of this approach is the remarkably reduced number of trained parameters, $2K$. 
This method, however, suffers from several drawbacks. First, compared to Ada-LISTA, ALISTA is restricted to small model perturbations, and cannot handle more general scenarios, such as random dictionaries or even column perturbations. Second, in terms of computational complexity, robust-ALISTA has a complicated calculation of the analytic matrices during both training and inference (\Eqref{eq:alitsa_matrix}), a limitation that does not exist in our scheme.
Lastly, robust-ALISTA's training targets the original sparse representations that generated the signals. This makes ALISTA both impractical to real-world scenarios and restricted to sparse coding applications. Ada-LISTA, on the other hand, operates with accessible ISTA/FISTA solutions of \Eqref{eq:basis_pursuit_objective}, and thus can be used for any generic problem, solvable with ISTA (\Eqref{eq:convex_problem}), e.g., low-rank matrix models \cite{sprechmann2015learning}, acceleration of Eulerian fluid simulation \cite{tompson2017accelerating} and feature learning \cite{andrychowicz2016learning}.

\section{Image Inpainting Results}
\label{app:inpainting}

Figures \ref{fig:inpainting_app_1}, \ref{fig:inpainting_app_2} present the qualitative inpainting results on the rest of the \texttt{Set11} images, presented in subsection \ref{subsec:inpainting}.

\begin{figure*}[ht]
    \centering
    \inpaintings{barbara_corrupt} \inpaintings{barbara_ista}
    \inpaintings{barbara_fista} \inpaintings{barbara_adalista} \\
    
    \inpaintings{boat_corrupt} \inpaintings{boat_ista}
    \inpaintings{boat_fista} \inpaintings{boat_adalista} \\
    
    \inpaintings{Cameraman_corrupt} \inpaintings{Cameraman_ista}
    \inpaintings{Cameraman_fista} \inpaintings{Cameraman_adalista} \\
    
    \inpaintings{couple_corrupt} \inpaintings{couple_ista}
    \inpaintings{couple_fista} \inpaintings{couple_adalista} \\
    
    \inpaintings{fingerprint_corrupt} \inpaintings{fingerprint_ista}
    \inpaintings{fingerprint_fista} \inpaintings{fingerprint_adalista} \\
    
    \inpaintings{peppers_corrupt} \inpaintings{peppers_ista}
    \inpaintings{peppers_fista} \inpaintings{peppers_adalista} 
    
    \caption{Image inpainting with $50\%$ missing pixels. From left to right: corrupted image, ISTA, FISTA, and Ada-LISTA.}
    \label{fig:inpainting_app_1}
\end{figure*}

\begin{figure*}[ht]
    \centering
    
    \inpaintings{hill_corrupt} \inpaintings{hill_ista} 
    \inpaintings{hill_fista} \inpaintings{hill_adalista} \\
    
    \inpaintings{house_corrupt} \inpaintings{house_ista} \inpaintings{house_fista} \inpaintings{house_adalista} \\
    
    \inpaintings{Lena_corrupt} \inpaintings{Lena_ista}
    \inpaintings{Lena_fista} \inpaintings{Lena_adalista} \\
    
    \inpaintings{man_corrupt} \inpaintings{man_ista}
    \inpaintings{man_fista} \inpaintings{man_adalista} \\
    
    \inpaintings{montage_corrupt} \inpaintings{montage_ista}
    \inpaintings{montage_fista} \inpaintings{montage_adalista} 
    
    \caption{Image inpainting with $50\%$ missing pixels. From left to right: corrupted image, ISTA, FISTA, and Ada-LISTA.}
    \label{fig:inpainting_app_2}
\end{figure*}

\end{document}